\documentclass[10pt]{article} %
\usepackage{arxiv}
\usepackage[numbers,sort&compress]{natbib}



\usepackage[utf8]{inputenc}
\usepackage[T1]{fontenc}
\usepackage{lmodern}      
\usepackage{microtype}    
\usepackage[english]{babel}

\usepackage{url}            
\usepackage{booktabs}       
\usepackage{amsfonts}       
\usepackage{nicefrac}       
\usepackage{xcolor}         
\usepackage{graphicx,amssymb,amsmath,amsthm}
\usepackage{microtype}
\usepackage{subcaption}
\usepackage[algo2e,linesnumbered,ruled,vlined]{algorithm2e}
\usepackage{varwidth}
\usepackage{enumitem}

\usepackage{tikz}
\usepackage{pgfplots}
\usetikzlibrary{pgfplots.groupplots,arrows,automata,positioning,shapes.multipart,positioning,arrows,calc}
\usepgfplotslibrary{dateplot,fillbetween,colormaps,colorbrewer}
\pgfplotsset{compat=1.17}
\usepackage{nicematrix}
\usepackage{booktabs}
\usepackage{caption}

\SetKwInput{KwInput}{Input}
\SetKwInput{KwOutput}{Output}

\newcommand\extrafootertext[1]{%
    \bgroup
    \renewcommand\thefootnote{\fnsymbol{footnote}}%
    \renewcommand\thempfootnote{\fnsymbol{mpfootnote}}%
    \footnotetext[0]{#1}%
    \egroup
}

\newcommand*{\R}{\mathbb{R}}
\newcommand*{\N}{\mathbb{N}}

\newtheorem{assumption}{Assumption}

\newtheorem{theorem}{Theorem}
\newtheorem{corollary}{Corollary}
\newtheorem{lemma}{Lemma}

\newtheorem{remark}{Remark}

\newcommand{\eps}{\varepsilon}


\newcommand{\set}[1]{\{#1\}}

\newcommand{\st}{\text{s.t.}}
\newcommand{\abs}[1]{\lvert{#1}\rvert}
\newcommand{\norm}[1]{\lVert{#1}\rVert}
\DeclareMathOperator*{\argmin}{argmin}
\DeclareMathOperator{\dist}{dist}

\newcommand{\nx}{m}
\newcommand{\ny}{p}
\newcommand{\nz}{d}

\newcommand{\x}{\lambda}
\newcommand{\y}{\theta}
\newcommand{\z}{w}
\newcommand{\X}{\Lambda}
\newcommand{\Y}{\Theta}
\newcommand{\bin}{\mathrm{bin}}
\newcommand{\Ybin}{\Theta_{\bin}}
\newcommand{\W}{\Lambda\times\Theta_{\bin}}

\newcommand{\define}{\mathrel{{\mathop:}{=}}}

\title{Relax and penalize: a new bilevel approach to \\mixed-binary hyperparameter optimization}

\author{Sara Venturini \\
      MIT Senseable City Lab \\
      Massachusetts Institute of Technology
      \And
    Marianna De Santis\\
      Department of Information Engineering \\
      University of Florence
      \And
      Jordan Patracone\\
      Inria, Laboratoire Hubert Curien \\
      Université Jean Monnet Saint-Etienne$^{\dagger}$
      \And
      Martin Schmidt\\
      Department of Mathematics\\
      Trier University 
      \AND
      Francesco Rinaldi\thanks{These authors contributed equally to this work.}\\
      Department of Mathematics \\
      University of Padova
      \And
      Saverio Salzo\footnotemark[1]\\
      DIAG \\
      Sapienza University of Rome
      }
      



\begin{document}

\maketitle

\begin{abstract}
In recent years, bilevel approaches have become very popular to
efficiently estimate high-dimensional hyperparameters of machine
learning models. However, to date, binary parameters are handled by
\emph{continuous relaxation and rounding} strategies, which could lead
to inconsistent solutions. In this context, we tackle the challenging
optimization of mixed-binary hyperparameters by resorting to an
equivalent continuous bilevel reformulation based on an appropriate
penalty term. We propose an algorithmic framework that, under suitable
assumptions, is guaranteed to provide mixed-binary
solutions. Moreover, the generality of the method allows to safely use
existing continuous bilevel solvers within the proposed framework. We
evaluate the performance of our approach for two specific
machine learning problems, i.e., the estimation of the
group-sparsity structure in regression problems and the data
  distillation problem.
The reported results show that our method is competitive with state-of-the-art approaches based on relaxation and rounding.

\end{abstract}


\section{Introduction}
\extrafootertext{\!\!$^{\dagger}$Université Jean Monnet Saint-Etienne, CNRS, Institut d’Optique Graduate School, Inria, Laboratoire Hubert Curien UMR 5516, F-42023, SAINT- ETIENNE, France.}Nowadays, machine learning systems tend to incorporate an increasing
number of hyperparameters with the purpose of improving the overall
performance of learning tasks and achieving a higher flexibility.
Then, optimizing such high-dimensional hyperparameters becomes a
a crucial step for devising an efficient and fully parameter-free machine
learning systems.
In recent years, bilevel approaches to hyperparameter optimization have
become very popular as an effective way to estimate high-dimensional
hyperparameters~\citep{Arbel2021b,bae2020delta,bennett2006model,franceschi2018bilevel,grazzi20a,
  maclaurin15,pedregosa16}.
On the other hand, in many circumstances, binary hyperparameters are
included in the model to allow the pruning of the irrelevant variables
or the discovery of sparsity structures. Interesting examples are
given by the pruning of large-scale deep learning
models~\citep{zhangadvancing}, the identification of the
group-sparsity structures in regression problems
\citep{2018_Frecon_J_p-neurips_biglasso,Wang2020},
and learning the discrete structure of a graph neural networks~\citep{franceschi19a}.
For these cases, the usual optimization approach is that of
\emph{relaxing} the respective parameterIn
over the unit interval $[0,1]$, solve the continuous optimization problem, and then \emph{rounding} the solution so to get a binary output.
This is essentially a heuristic, which overcomes the challenge of dealing with integer variables, but
does not offer any theoretical guarantees.
The aim of the present work is to provide a more principled way of approaching mixed-binary hyperparameter optimization.

\textbf{Related works.}
In the context of machine learning,
bilevel optimization problems with binary variables arise in a number of situations.
In~\citet{2018_Frecon_J_p-neurips_biglasso}
the estimation of group-sparsity structures
in multi-task regression is addressed by
a mixed-binary bilevel optimization model,
which is handled by a continuous relaxation and approximation of the problem. The output of the optimization procedure is a vector of continuous variables, which are then rounded to the closest binary values,
so to provide the final grouping of the features. In Section~\ref{sec:applications}, we
tackle this same problem and show the advantage of our approach.
In the work~\citet{zhangadvancing},
a new model pruning, based
on bilevel optimization, is proposed, where the upper level variable is a binary mask. The related iterative algorithm performs a gradient descent step on the continuous relaxation of the problem followed by a projection step onto a discrete set, which is indeed a hard-thresholding operation. No convergence guarantees are provided. 
In~\citet{borsosetal2024}, the authors present a general framework for coreset construction by formulating coreset selection as a cardinality-constrained bilevel optimization problem, solved using a tailored algorithm that combines greedy forward
selection and first-order methods. The proposed approach is model-agnostic and applicable to any twice-differentiable model, including neural networks. A drawback is that the coreset weights must be determined for each selection step, which involves an iterative process. To streamline this, binary weights (i.e., unweighted coreset) are also used and a mixed-binary bilevel optimization problem is defined, thus eliminating the weight optimization step. The authors only report a theoretical analysis of the algorithm for the continuous-weights case.
In another recent paper~\citet{zhou2024learning}
some deep learning techniques are developed to tackle a bilevel problem with a binary tender, i.e., a problem where the upper and lower levels are connected through binary variables. A neural network is trained to approximate the optimal value of the lower-level problem as a function of the binary tender. This enables a single-level reformulation of the bilevel program using a mixed-integer representation of the value function. Additionally, a comparative analysis is conducted between two neural network architectures—general neural networks and novel input-supermodular neural networks—to assess their representational capacities. To handle high-dimensional bilevel programs, an enhanced sampling method is introduced to generate higher-quality samples, along with an iterative process to refine solutions.

Beyond the machine learning literature, there are a number of works
related to mixed-integer bilevel programming; see, e.g., Section~5.3
of the recent survey~\citet{Kleinert_et_al:2021c}. An important
drawback one needs to take into account when applying those methods to
hyperparameter optimization problems are however limited
scalability.
Common approaches like, e.g.,  the outer-approximation-based method
in~\citet{Kleinert_et_al:2021b}, or the algorithm proposed
in~\citet{Mitsos:2010}, which requires the global solution of a
a significant number of mixed-integer nonlinear programs have indeed a
prohibitive cost when the dimensionality grows. Furthermore, it
should be noted that they aim to achieve global optimality---an
overly ambitious goal in the context of machine learning
applications.




\textbf{Contributions and outline.}
In this paper, we analyze more carefully the mixed-binary setting and
propose a \emph{relax and penalize} method, which produces a
mixed-binary output and relies on improved mathematical grounds. More
precisely, we present in Section~\ref{sec:problem-statement} a general
mixed continuous-binary bilevel problem and show in
Section~\ref{sec:relpenprob} that it is equivalent, in terms of global
minima and minimizers, to a fully continuous and penalized
optimization problem.
Next, in Section~\ref{sec:penalty-method}, we propose an algorithmic
framework, which consists of iteratively solving a sequence of
continuous and penalized problems which, under suitable assumptions, are guaranteed to provide mixed-binary local solutions.
The performance of the proposed approach is quantitatively assessed on the two machine-learning applications of the group structure estimation in the group lasso problem and the data distillation task.
 Numerical experiments are reported in Section~\ref{sec:applications} and show how the \emph{relax and penalize} method compares with state-of-the-art approaches based on relaxation and rounding. 
Finally, conclusions and perspectives are drawn in Section~\ref{sec:conc}.

\textbf{Notation.}
For every integer $n \geq 1$, $[n]$ denotes the set $\{1, \dots, n\}$.
We denote by $\norm{\cdot}$ the Euclidean norm in $\R^n$ and by
$\norm{\cdot}_{\infty}$ the infinity norm, meaning $\norm{x}_{\infty}
= \max_{1 \leq u \leq n} \abs{x_i}$. If $x \in \R^n$
and $\rho>0$ we denote by $B_{\rho}(x)$ the closed ball in $\R^n$ with
center~$x$ and radius~$\rho$,
i.e., $B_\rho(x) = \{x^\prime \in \R^n \colon \norm{x^\prime - x}
\leq \rho\}$. The standard $(n-1)$-simplex is denoted by $\Delta^{n-1} = \{x \in \R_+^n \colon \sum_{i=1}^n x_i = 1\}$.
Moreover, $\odot$ is the Hadamard product, meaning the component-wise
multiplication of vectors in $\R^d$. If $\Psi\colon \R^n \to \R$
is a continuous function and $\Omega \subseteq \R^n$, we denote by
$\argmin_{\Omega} \Psi$ the set of minimizers of $\Psi$
over $\Omega$ and with a slight abuse of notation also
the minimizer itself when it is unique.


\section{Problem statement}
\label{sec:problem-statement}

We consider mixed-binary bilevel problems of the form
\begin{subequations}
  \label{prob:orig}
  \begin{align}
    \min_{\x,\y} \quad
    & F(\x,\y,\z(\x,\y))
    \\
    \st \quad
    & \x \in \X\subseteq \R^{\nx}, \ \y \in \Y_{\mathrm{bin}}\subseteq \set{0,1}^{\ny},
    \\
    & \z(\x,\y) = \argmin_{\z \in W(\x,\y)} f(\x,\y,\z),
  \end{align}
\end{subequations}
where $F,f\colon\R^{\nx}\times\R^{\ny} \times \R^{\nz} \to \R$ and $W(\x,\y)\subseteq \R^{\nz}$.
We note that the lower-level problem is supposed to
admit a unique solution and that the hyperparameters $\x$ and $\y$
are continuous and binary variables, respectively. In the context of
machine learning problems, the functions $F$ and $f$ often are the
loss over a validation set and training set, respectively. We will
provide major applications of this situation in
Section~\ref{sec:applications}.

In the remainder of this paper we assume that the binary set~$\Ybin$ is
embedded in a larger continuous set~$\Y$
and that the lower-level problem admits a unique solution also for
$\y \in \Y$.
Then, we can consider the following and more compact formulation
\begin{equation}
  \label{prob:origG}
  \min_{(\x,\y) \in \X\times\Y_{\bin}} G(\x,\y)
\end{equation}
of the above problem, where we set $G(\x,\y)\define
F(\x,\y,\z(\x,\y))$ and require that the following assumption holds.

\begin{assumption}
\label{ass:a1}
\
\begin{enumerate}[label={\rm (\roman*)}]
\item $\X\subseteq \R^{\nx}$ is nonempty and compact.
\item\label{ass:theta} $\Y_{\bin} \define  \Y \cap \{0,1\}^{\ny}\neq \varnothing$,
    where $\Y \subseteq [0,1]^{\ny}$ is convex and compact and $\Y\setminus\Y_{
      \bin} \neq \varnothing$.
\item\label{ass:a1_iii} $G\colon \X \times \Y\to \R$ is continuous.
\item\label{ass:a1_iv} \label{ass:lip} For all $\x \in \X$, the map $G(\x,\cdot)$ is Lipschitz continuous with constant $L>0$ on $\Y$.
\end{enumerate}
\end{assumption}

Assumption~\ref{ass:a1} can be met with appropriate hypotheses on the functions $F$
and $f$. For instance, a sufficient condition for \ref{ass:a1_iii} is that
the functions $F$ and $f$ are jointly continuous, that the function $f
(\x,\y,\cdot)$ is strongly convex with a modulus of convexity which is
uniform for every $(\x,\y)$, and that the set-valued mapping $(\x,\y)
\mapsto W(\x,\y)$ is closed and such that, for every $(\bar{\x},
\bar{\y}) \in \X\times\Y$, $\mathrm{dist}(\z(\bar{\x},\bar{\y}),
W(\z,\y))\to 0$ as $(\x,y)\to (\bar{\x}, \bar{\y})$
\citep[Proposition~4.4]{bonnans2000}.
Additional conditions can ensure the validity of \ref{ass:a1_iv} too; see, 
\citet[Section~4.4]{bonnans2000}.



\section{Restating the problem via a smooth penalty function}
\label{sec:relpenprob}

In order to deal with the binary variables in
problem~\eqref{prob:origG}, we relax the integrality constraints on
$\y$ via a classic penalty term.
This leads to the continuous optimization problem
\begin{equation}
  \label{prob:ref}
    \min_{(\x,\y) \in \X\times\Y} G(\x,\y) + \frac{1}{\eps} \varphi(\y)
\end{equation}
in which we use the penalty function
\begin{equation}\label{eq:penfun}
  \varphi(\y) = \sum_{i=1}^{\ny} \y_i (1 - \y_i).
\end{equation}
Note that the function in~\eqref{eq:penfun} is a smooth, concave, and
quadratic function with the following properties:
\begin{equation*}
\forall\, \y \in [0,1]^{\ny}\colon \varphi(\y) \geq 0
\quad\text{and}\quad
\forall\, \y \in \{0,1\}^{\ny}\colon \varphi(\y) = 0.
\end{equation*}
This penalty has been introduced in~\citet{raghavachari1969connections} to
define equivalent continuous reformulations of mixed-integer linear
programming problems. In Section~\ref{sec:auxiliary} we give the main
properties of this penalty function that we use to prove the main
results of this and the next section.

We start with a result establishing the equivalence of
Problems~\eqref{prob:origG} and \eqref{prob:ref} in terms of
global minimizers. It is in line with a stream of works analyzing
the use of concave penalty functions in the framework of nonlinear
optimization problems with binary or integer variables (see, e.g.,
\citet{giannessi1976connections,kalantari1987penalty,Lucidi_2010}
and references therein). For the reader's convenience we provide the proof
of this result in the appendix.

\begin{theorem}\label{teo:equiv}
  Suppose that
  Assumption~\ref{ass:a1} is satisfied.
  Then, there exists an $\bar{\varepsilon} > 0$ such that
  for all $\varepsilon \in \left]0,\bar{\varepsilon}\right]$,
  Problems~\eqref{prob:origG} and \eqref{prob:ref} have the same global
  minimizers, i.e.,
  \begin{equation*}
  \argmin\limits_{(\x,\y) \in \W} G(\x,\y)
  = \argmin_{(\x,\y) \in \X \times \Y}G(\x,\y)+\frac{1}{\varepsilon} \varphi(\y).
\end{equation*}
\end{theorem}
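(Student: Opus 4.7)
My plan is to treat the penalized problem~\eqref{prob:ref} as an \emph{exact} penalty reformulation of~\eqref{prob:origG}. The engine is a ``rounding'' property of $\varphi$ that I expect to be developed in Section~\ref{sec:auxiliary}: for every $\y \in \Y$ there exists $\bar\y \in \Ybin$ with $\|\y - \bar\y\| \leq C\,\varphi(\y)$ for some constant $C > 0$ depending only on $\Y$. The \emph{linear} dependence on $\varphi(\y)$ (rather than $\sqrt{\varphi(\y)}$) is what makes the penalty exact. For the hypercube $\Y = [0,1]^{\ny}$ this is elementary, since the coordinate-wise inequality $\y_i(1 - \y_i) \geq \tfrac{1}{2}\min(\y_i, 1 - \y_i)$ implies $\|\y - \bar\y\|_1 \leq 2\varphi(\y)$ when $\bar\y$ is the coordinate-wise rounding, giving $C = 2$.

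Granting the lemma, the proof is almost immediate. Let $(\x^*, \y^*) \in \argmin_{\W} G$ and denote $M^* = G(\x^*, \y^*)$. Since $\varphi$ vanishes on $\Ybin$, the penalized and original objectives agree on $\W$, and in particular the penalized objective equals $M^*$ at $(\x^*, \y^*)$. For any $(\x, \y) \in \X \times \Y$, pick $\bar\y \in \Ybin$ as in the rounding lemma; Lipschitz continuity of $G(\x, \cdot)$ from Assumption~\ref{ass:a1}\ref{ass:lip}, combined with $G(\x, \bar\y) \geq M^*$, yields
\[
G(\x, \y) \;\geq\; G(\x, \bar\y) - L\|\y - \bar\y\| \;\geq\; M^* - LC\,\varphi(\y),
\]
and therefore
\[
G(\x, \y) + \tfrac{1}{\eps}\varphi(\y) \;\geq\; M^* + \varphi(\y)\bigl(\tfrac{1}{\eps} - LC\bigr).
\]

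Choose any $\bar\eps \in \left]0, 1/(LC)\right[$---for instance $\bar\eps = 1/(2LC)$. For every $\eps \in \left]0, \bar\eps\right]$ the factor $1/\eps - LC$ is strictly positive, so the penalized objective is bounded below by $M^*$ on all of $\X \times \Y$, with strict inequality whenever $\varphi(\y) > 0$. Consequently any minimizer of~\eqref{prob:ref} must satisfy $\varphi(\y) = 0$, i.e.\ $\y \in \Ybin$, and must attain $G = M^*$; conversely, each minimizer of~\eqref{prob:origG} realizes the value $M^*$ also in~\eqref{prob:ref} because $\varphi$ vanishes there. The two sets of global minimizers therefore coincide.

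The hard part is precisely the rounding lemma beyond the hypercube case. For a general convex $\Y \subsetneq [0,1]^{\ny}$, coordinate-wise rounding may leave $\Y$, so producing a binary point in $\Ybin$ (not merely in $\{0,1\}^{\ny}$) with a \emph{linear} control $\|\y - \bar\y\| \leq C\varphi(\y)$ is nontrivial---one likely needs a Hoffman-type error bound or a dedicated argument exploiting the extreme-point structure of $\Y$. I expect this to be the content of Section~\ref{sec:auxiliary}, after which the exact penalty argument above closes the proof.
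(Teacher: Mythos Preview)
Your approach is correct and is, at its core, the same argument the paper runs---just repackaged. The engine in both proofs is the inequality $\varphi(\y) \geq c\,\norm{\y - \bar\y}$ for $\y$ close to a binary $\bar\y$ (this is Corollary~\ref{cor2} in the appendix), combined with the Lipschitz bound on $G(\x,\cdot)$. Where you phrase this as a single global ``rounding lemma'' $\dist(\y,\Ybin)\leq C\,\varphi(\y)$ and then apply it uniformly, the paper instead splits the feasible region into a neighbourhood $U=\bigcup_{\y\in\Ybin} B_\rho(\y)$ of the binary points and its complement, handling the two pieces separately inside the proof: Corollary~\ref{cor2} for $\y^*\in U$, and a direct compactness argument ($\varphi$ has a strictly positive minimum on the compact set $\Y\setminus U$, so the penalty blows up) for $\y^*\notin U$.

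Two clarifications. First, Section~\ref{sec:auxiliary} does \emph{not} prove your rounding lemma as such; it only supplies the local estimate (Corollary~\ref{cor2}). The ``far'' part---the fact that $\varphi$ is bounded below by a positive constant on $\Y\setminus U$---is established on the fly in the proof of Theorem~\ref{teo:equiv}, not in the appendix. Second, your concern that general convex $\Y$ might require a Hoffman-type error bound is unwarranted: since $\Y$ is closed, any binary point of $[0,1]^{\ny}$ that is not in $\Y$ is at positive distance from $\Y$, so the only places in $\Y$ where $\varphi$ can vanish are exactly the points of $\Ybin$. Your lemma therefore follows from Corollary~\ref{cor2} near $\Ybin$ and compactness away from it---precisely the paper's two-case split. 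The upshot is that your presentation is a bit cleaner (no explicit case distinction in the main argument), while the paper's version makes the two regimes and the role of compactness more visible; mathematically they coincide.
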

The conclusion of Theorem~\ref{teo:equiv} is remarkable since it
guarantees that despite the fact that \eqref{prob:ref} is a purely continuous
optimization problem, for $\varepsilon$ sufficiently small, all of its
global minimizers are mixed-binary feasible and are exactly the global
minimizers of the original problem~\eqref{prob:origG}.
\begin{remark}\
  \begin{enumerate}[label={\rm (\roman*)},leftmargin=4ex]
  \item\label{rmk1_i}
    Set $G_{\varepsilon}\colon \X\times \Y \to \R$ such that
    $G_\varepsilon(\x,\y) = G(\x,\y)+\varepsilon^{-1} \varphi(\y)$
    and let $\delta_{\X\times \Ybin}\colon \X\times\Y \to \R$ be the
    indicator function of the set $\X\times \Ybin$, i.e.,
    the function that is zero on $\X\times \Ybin$ and $+\infty$
    otherwise.
    Then, it is easy to see that $G_\varepsilon$
    $\Gamma$-converges\footnote{This type of convergence of functions is
      also known as epiconvergence.} to $G + \delta_{\X\times \Ybin}$ as
    $\varepsilon \to 0$.
    Moreover, the family of functions $(G_\varepsilon)_{\varepsilon>0}$
    is clearly equicoercive since they are all defined on the compact
    set~$\X\times \Y$. Therefore, it holds
    \begin{equation*}
      \argmin_{\X\times \Y} G_\varepsilon \to \argmin_{\X\times \Y} G
      + \delta_{\X\times \Ybin} = \argmin_{\X\times \Ybin} G
      \quad \text{as}\ \varepsilon\to 0
    \end{equation*}
    in the sense of set convergence. This is a standard result from variational
    analysis \citep{Dontchev93} and it is always true provided that $G$
    and $\varphi$ are continuous functions as well as that
    $\varphi \geq 0$ and  $\varphi(\y)=0$ if and only if $\y \in
    \Ybin$ holds.
  \item In view of \ref{rmk1_i}, which gives an asymptotic result, the
    statement of Theorem~\ref{teo:equiv} is stronger in the sense that, for the
    special function \eqref{eq:penfun} and for $\varepsilon$ small
    enough, $\argmin_{\X\times \Y} G_\varepsilon = \argmin_{\X\times
      \Ybin} G$ holds. 
  \end{enumerate}
\end{remark}

The previous theorem provides a justification to address
problem~\eqref{prob:ref} instead of \eqref{prob:origG}.
However, because the objective function in~\eqref{prob:ref} is
nonconvex, only local minimizers
are computationally approachable. 
%
Thus, the idea is that of looking for local minimizers of
\eqref{prob:ref} which are also mixed-binary---since the global
minimizers of \eqref{prob:origG} lie among them.

The next result is entirely new and addresses the issue of identifying
mixed-binary local minimizers of the objective in~\eqref{prob:ref},
providing a sufficient condition for that purpose.



\begin{theorem}\label{thm:01b}
Suppose that Assumption~\ref{ass:a1} holds.
  Let $c\in \, ]0, 1/2[$ and $0<\varepsilon < (1-2c)/L$.
  Moreover, let $(\bar \x, \bar \y)$ be a local minimizer of
  \begin{equation*}
G(\x,\y) + \frac{1}{\varepsilon} \varphi(\y) \;\mbox{ on }\; \Lambda\times\Theta.
\end{equation*}
  If  $\dist_{\infty}(\bar \y, \Ybin)\define \inf_{\y \in
    \Ybin} \norm{\bar{\y} - \y}_{\infty} < c$, then $\bar \y \in
  \Ybin$.
\end{theorem}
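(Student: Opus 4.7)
The plan is to argue by contradiction: assume $\bar\y \notin \Ybin$ and show that moving infinitesimally toward the closest binary feasible point strictly decreases the penalized objective, which violates local optimality. Since $\Ybin \subseteq \{0,1\}^{\ny}$ is finite, the infimum in $\dist_\infty(\bar\y, \Ybin)$ is attained at some $\y^* \in \Ybin$, and by hypothesis $\|\bar\y - \y^*\|_\infty < c < 1/2$. The condition $c<1/2$ is crucial here: it guarantees that, coordinate-wise, $\y^*_i$ is the unique nearest element of $\{0,1\}$ to $\bar\y_i$, namely $\y^*_i = 0$ when $\bar\y_i < c$ and $\y^*_i = 1$ when $\bar\y_i > 1-c$. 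Set $\delta \define \y^* - \bar\y$ and, by convexity of $\Theta$, consider the feasible path $\y(t) \define \bar\y + t\delta \in \Theta$ for $t \in [0,1]$.

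The key estimate is the following upper bound on the change of the penalty along the path:
\begin{equation*}
\varphi(\y(t)) - \varphi(\bar\y) \leq -t(1-2c)\,\|\delta\|_1.
\end{equation*}
To establish it, I would expand $\y_i(t)(1-\y_i(t))$ to get
$\varphi(\y(t)) - \varphi(\bar\y) = t\sum_i \delta_i(1-2\bar\y_i) - t^2\|\delta\|^2$,
and then verify the coordinate-wise inequality $\delta_i(1-2\bar\y_i) \leq -|\delta_i|(1-2c)$ by splitting the two cases $\y^*_i=0$ (where $\delta_i = -\bar\y_i$ and $1-2\bar\y_i > 1-2c$) and $\y^*_i=1$ (where $\delta_i = 1-\bar\y_i$ and $-(1-2\bar\y_i) > 1-2c$). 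Dropping the nonpositive $-t^2\|\delta\|^2$ term then gives the desired bound.

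Combining this with the Lipschitz bound $G(\bar\x,\y(t)) - G(\bar\x,\bar\y) \leq L t\|\delta\| \leq L t\|\delta\|_1$ (using $\|\cdot\| \leq \|\cdot\|_1$ in $\R^{\ny}$), I obtain
\begin{equation*}
G(\bar\x,\y(t)) + \tfrac{1}{\varepsilon}\varphi(\y(t)) - G(\bar\x,\bar\y) - \tfrac{1}{\varepsilon}\varphi(\bar\y) \leq t\,\|\delta\|_1\left(L - \tfrac{1-2c}{\varepsilon}\right).
\end{equation*}
Since $\bar\y \notin \Ybin$ forces $\|\delta\|_1 > 0$, and the hypothesis $\varepsilon < (1-2c)/L$ makes the parenthesis strictly negative, the right-hand side is strictly negative for every $t>0$. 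Choosing $t>0$ small enough that $\y(t)$ lies in the local neighborhood provided by the local minimality of $(\bar\x,\bar\y)$ yields the sought contradiction, hence $\bar\y \in \Ybin$.

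The main obstacle is the sharp estimate on $\varphi(\y(t)) - \varphi(\bar\y)$: the role of $c<1/2$ and of the exact constant $1-2c$ must be identified carefully so that the penalty decrease (linear in $t$) dominates the Lipschitz variation of $G$ exactly when $\varepsilon < (1-2c)/L$. Everything else is a straightforward combination of convexity of $\Theta$, finiteness of $\Ybin$, the norm inequality $\|\cdot\|\leq \|\cdot\|_1$, and the local optimality of $(\bar\x,\bar\y)$.
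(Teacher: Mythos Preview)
Your argument is correct and follows the same overall strategy as the paper: assume $\bar\y\notin\Ybin$, move along the segment toward a nearby binary point $\y^*\in\Ybin$ with $\|\bar\y-\y^*\|_\infty<c$, and show that the penalty decreases linearly with a slope at least $(1-2c)/\varepsilon$, which beats the Lipschitz increase of $G$ precisely when $\varepsilon<(1-2c)/L$.

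The one noteworthy difference is how you obtain the key penalty estimate. The paper establishes $\varphi(\bar\y)-\varphi(\y_t)\geq(1-2c)\|\y_t-\bar\y\|$ through a chain of auxiliary lemmas in the appendix (culminating in Lemma~\ref{lemma:varphi}). You instead expand $\varphi(\y(t))-\varphi(\bar\y)=t\sum_i\delta_i(1-2\bar\y_i)-t^2\|\delta\|^2$ directly and verify the coordinate-wise inequality $\delta_i(1-2\bar\y_i)\leq -|\delta_i|(1-2c)$ by the two cases $\y^*_i=0$ and $\y^*_i=1$. This yields the slightly sharper bound in the $\ell_1$-norm, $\varphi(\bar\y)-\varphi(\y(t))\geq (1-2c)\,t\|\delta\|_1$, which you then pair with $\|\delta\|\leq\|\delta\|_1$ on the Lipschitz side. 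Your derivation is more self-contained and avoids the auxiliary machinery, at no loss of generality; the paper's route, on the other hand, isolates reusable properties of $\varphi$ that are also used elsewhere (e.g., in the proof of Theorem~\ref{teo:equiv}). A minor remark: you do not actually need $\y^*$ to be the \emph{nearest} point of $\Ybin$---any $\y^*\in\Ybin$ with $\|\bar\y-\y^*\|_\infty<c$ suffices, as in the paper.
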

\begin{proof}
  Since $\dist_{\infty}(\bar \y, \Ybin) < c$,
  there exists $\y \in \Ybin$ such that  $\norm{\bar \y - \y}_{\infty} < c$.
  Let
  \begin{equation*}
    \y_t \define (1-t)\bar \y + t\y = \bar \y + t(\y-\bar \y)
    \quad\text{with}\ t\in [0,1].
  \end{equation*}
  In particular, $\norm{\y_t - \bar \y}_{\infty} = t\norm{\y-\bar \y}_{\infty} < tc \leq c$
  and $\y_t \in \Y$, since~$\Y$ is convex.
  By Lemma~\ref{lemma:varphi},
  \begin{equation}\label{eq:varphi}
    \varphi(\bar \y) - \varphi(\y_t) \geq (1-2c) \| \y_t-\bar \y\|
  \end{equation}
  holds. Moreover, there exists $\rho > 0$ such that for all $(\x',\y')  \in
  B_\rho (\bar \x, \bar \y) \cap (\Lambda\times\Theta)$, it holds
  \begin{equation*}
    G(\bar \x, \bar \y) + \frac 1 \varepsilon \varphi(\bar \y) \leq
    G(\x',\y') + \frac 1 \varepsilon \varphi(\y').
  \end{equation*}
  Now, take $t\in \, ]0,1[$ such that $t < \rho/(c \sqrt{\ny})$.
  Then, $\y_t \in \Y$ and $\norm{\y_t - \bar \y} \leq \sqrt{\ny}
    \norm{\y_t - \bar \y}_{\infty} <  \sqrt{\ny}\, t c < \rho$.
  Therefore, $(\bar \x, \y_t) \in B_\rho (\bar \x, \bar \y) \cap (\Lambda\times\Theta)$
  and we obtain
  \begin{align}
    \nonumber G(\bar \x, \y_t) - G(\bar \x, \bar \y) + \frac{1}{\varepsilon}\varphi(\y_t) - \frac{1}{\varepsilon}\varphi(\bar \y) & \leq L\| \y_t - \bar \y\| + \frac{1}{\varepsilon}(\varphi(\y_t) - \varphi(\bar \y))\\
    \nonumber  & \overset{\eqref{eq:varphi}}{\leq}  L\| \y_t - \bar \y\| - \frac{1-2c}{\varepsilon} \|\y_t - \bar \y\| \\
                                                                                                                                  & = \underbrace{\bigg(L - \frac{1-2c}{\varepsilon}\bigg)}_{< 0} \|\y_t - \bar \y\|. \label{eq:contr}
  \end{align}
  Moreover, if $\bar \y \not\in \{0,1\}^{\ny}$, since $\y\in \{0,1\}^{\ny}$, we have $\| \y - \bar \y\|>0$ and hence
  $\| \y_t - \bar \y\|>0$ for $t> 0$.
  Thus, \eqref{eq:contr} is strictly negative and it holds
  \begin{equation*}
    G(\bar \x, \y_t)  + \frac{1}{\varepsilon}\varphi(\y_t) < G(\bar \x,
    \bar \y)  + \frac{1}{\varepsilon}\varphi(\bar \y),
  \end{equation*}
  which gives a contradiction. Thus, necessarily $\bar \y \in \{0,1\}^{\ny}$.
\end{proof}

\begin{remark}\
\begin{enumerate}[label={\rm (\roman*)},leftmargin=4ex]
\item
Theorem~\ref{thm:01b} essentially says that, if $\varepsilon$ is small enough,
within the distance of $1/2$ measured with the infinity norm, there
are no other local minimizers of \eqref{prob:ref} than the ones that
are mixed-binary feasible.
\item Note that it does not make much sense to consider local
  minimizers of the function $G$ over $\X \times \Ybin$,
since any point in~$\Ybin$ is an isolated point and thus one can find
a corresponding local minimizer for each one of them.
\end{enumerate}
\end{remark}

\section{An iterative penalty method}
\label{sec:penalty-method}



We now present an iterative method addressing problem~\eqref{prob:origG}.
The idea is that
of solving a sequence of problems of
the form \eqref{prob:ref}, indexed with $k$, with decreasing parameters~$\eps_k$.
Hence, the problem to be solved in each iteration reads
\begin{equation}
  \label{prob:refk}
  \tag{\textnormal{P$^k$}}
    \min_{(\x,\y) \in \X\times\Y}  G(\x, \y) + \frac{1}{\eps^k} \varphi(\y).
\end{equation}
Then, thanks to Theorem~\ref{teo:equiv}, it is clear that  after a
finite number of iterations, the original mixed-binary optimization
problem and the relaxed and penalized one \eqref{prob:refk} become
equivalent in terms of global minimizers. Moreover, as we have already
discussed in the previous section, in practice we can only target the
computation of local minimizers, but we can restrict the search to the
mixed-binary ones. In the following, we make this strategy more
precise.

\begin{theorem}\label{teo:limitpointPk}
Suppose that Assumption~\ref{ass:a1} holds.
Let $(\varepsilon_k)_{k \in \N}$ be a vanishing sequence of positive numbers and, for every $k \in \N$, let $(\x^k,\y^k)$ be a local minimizer of \eqref{prob:refk}. Then,
\begin{equation*}
   \liminf\limits_{k\to+\infty} \dist_{\infty}(\y^k, \Ybin) < 1/2
   \implies \exists\, k \in \N\ \text{s.t.}\ \y^k \in \Ybin.
\end{equation*}
Moreover, if $\y^k \in \Ybin$, then
we have that $\x^k$ is a local minimizer of
\begin{equation*}
    \min_{\x \in \X} G(\x, \y^k).
\end{equation*}
\end{theorem}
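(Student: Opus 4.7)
The plan is to handle the two implications separately, with the first one reducing to Theorem~\ref{thm:01b} via a subsequence argument, and the second one following almost directly from the definition of local minimizer once $\varphi(\y^k) = 0$ is observed.

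For the first implication, I would start by unpacking the hypothesis $\liminf_{k\to\infty} \dist_\infty(\y^k, \Ybin) < 1/2$. By definition of $\liminf$, there exist a constant $c \in \, ]0, 1/2[$ and an infinite subset of indices $K \subseteq \N$ such that $\dist_\infty(\y^k, \Ybin) < c$ for all $k \in K$. Since $\eps_k \to 0$, there exists $k_0 \in \N$ such that $\eps_k < (1-2c)/L$ for all $k \geq k_0$. Pick any $k \in K$ with $k \geq k_0$. Then $(\x^k,\y^k)$ is a local minimizer of $G(\x,\y) + \eps_k^{-1}\varphi(\y)$ on $\X\times\Y$, the index $c$ satisfies the condition from Theorem~\ref{thm:01b}, and $\eps_k$ is small enough. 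Applying Theorem~\ref{thm:01b} directly yields $\y^k \in \Ybin$, which proves the claim.

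For the second implication, suppose $\y^k \in \Ybin$. Since $(\x^k,\y^k)$ is a local minimizer of \eqref{prob:refk}, there is a radius $\rho > 0$ such that
\begin{equation*}
G(\x^k,\y^k) + \frac{1}{\eps_k}\varphi(\y^k) \leq G(\x',\y') + \frac{1}{\eps_k}\varphi(\y')
\end{equation*}
for every $(\x',\y') \in B_\rho(\x^k,\y^k) \cap (\X\times\Y)$. Restrict to $\y' = \y^k$: since $\y^k \in \Ybin \subseteq \Y$, the pair $(\x',\y^k)$ remains feasible for every $\x' \in B_\rho(\x^k) \cap \X$, and using $\varphi(\y^k) = 0$ the penalty terms cancel, leaving
\begin{equation*}
G(\x^k,\y^k) \leq G(\x',\y^k) \quad\text{for all } \x' \in B_\rho(\x^k)\cap \X.
\end{equation*}
This is exactly the statement that $\x^k$ is a local minimizer of $G(\cdot,\y^k)$ over $\X$.

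The only mildly delicate point is the first part: one has to be careful that the subsequence selection works simultaneously with the condition $\eps_k < (1-2c)/L$, but since $c$ can be chosen strictly less than $1/2$ (thanks to the strict inequality in the $\liminf$) and $\eps_k \to 0$, we can always find indices large enough to satisfy both. The second part is essentially a bookkeeping check that the penalty vanishes on $\Ybin$, so no real obstacle arises there.
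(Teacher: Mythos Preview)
Your proof is correct and follows essentially the same approach as the paper's own proof: a subsequence argument combined with Theorem~\ref{thm:01b} for the first implication, and a direct restriction to $\y' = \y^k$ using $\varphi(\y^k)=0$ for the second. The only difference is cosmetic phrasing of the subsequence extraction.
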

\begin{proof}
Suppose that
$\liminf\limits_{k\to+\infty} \dist_{\infty}(\y^k, \Ybin) < 1/2$ and let $c>0$ such that $\liminf\limits_{k\to+\infty} \dist_{\infty}(\y^k, \Ybin) < c <1/2$. Then,
 there exists a subsequence $(\y^{n_k})_{k \in \N}$
 such that
\begin{equation*}
\forall\, k \in \N\colon\ \dist_{\infty}(\y^{n_k}, \Ybin)
    <c
    \quad\text{and}\quad \varepsilon^{n_k} \to 0.
\end{equation*}
Thus, there exists $k \in \N$ such that
\begin{equation*}
    \dist_{\infty}(\y^{n_k}, \Ybin)< c
    \quad\text{and}\quad \varepsilon_{n_k}< \frac{1-2c}{L}
\end{equation*}
and this, in view of Theorem~\ref{thm:01b},
gives that $\y^{n_k} \in \Ybin$.
Concerning the second part of the statement, suppose that $\y^k \in \Ybin$,
where $(\x^k, \y^k)$ is a local minimizer of  \eqref{prob:refk}.
Then, $\y^k \in \{0,1\}^{\ny}$ and there exists $\rho_k>0$
such that
\begin{equation*}
    \forall\, (\x,\y) \in B_{\rho_k}(\x^k,\y^k)\cap(\X\times\Y)\colon\
    G(\x^k,\y^k) + \frac{1}{\varepsilon_k}\varphi(\y^k) \leq G(\x,\y) + \frac{1}{\varepsilon_k} \varphi(\y).
\end{equation*}
Therefore, taking $\y=\y^k$ in the above inequality and noting that $\varphi(\y^k)=0$,
we have
\begin{equation*}
    \forall\, \x \in B_{\rho_k}(\x^k)\cap\X\colon\
    G(\x^k,\y^k) + \frac{1}{\varepsilon_k}\underbrace{\varphi(\y^k)}_{=0} \leq G(\x,\y^k) + \frac{1}{\varepsilon_k} \underbrace{\varphi(\y^k)}_{=0},
\end{equation*}
which shows that $\x^k$ is a local minimizer of $G(\cdot, \y^k)$ over $\X$.
\end{proof}

\begin{algorithm2e}[t]
  \DontPrintSemicolon

  \KwInput{Problem~\eqref{prob:origG}, $\eps^0 > 0$, $\beta \in \left]0,1\right[$.}

  \For{$k = 0, 1, 2, \dotsc$}{
    Let $(\x^k,\y^k)$ be a solution (either local or global) of
    problem~\eqref{prob:refk}.

    \uIf{$\y^k \notin \set{0,1}^{\ny}$}{
      update $\eps^{k+1} = \beta \eps^k$
    } \uElse {
      \Return $(\x^k, \y^k)$.
    }
  }

  \caption{Penalty method}
  \label{alg:penalty-algorithm}
\end{algorithm2e}

\begin{remark}\ \label{remark3}
  \begin{enumerate}[label={\rm (\roman*)}]
  \item In the experiments given in Section~\ref{sec:applications},
    we checked that the condition considered in
    Theorem~\ref{teo:limitpointPk} always occurs, meaning that the
    distance  $\dist_{\infty}(\y^k,\Ybin)$, where $\y^k$ was obtained
    by solving problem \eqref{prob:refk} via a gradient-based
    subroutine, remains well-below the threshold~$1/2$ for $k$ sufficiently large (See Appendix \ref{sec:app_D3} for a plot of the evolution along the iterations). 
  \item We note that there might indeed exist points $\y\in \Y$ such that
    $\dist_{\infty}(\y, \Ybin) > 1/2$.
    For instance, if we take the standard $(\ny-1)$-simplex
    \begin{equation*}
      \Y = \Delta^{\ny-1} = \bigg\{\y \in \R_+^{\ny} \,\colon \sum_{i=1}^{\ny} \y_i=1\bigg\},
    \end{equation*}
    we have that
    \begin{equation}
      \label{eq:20230509b}
      \big\{\y \in \Y\,\colon \dist_{\infty}(\y, \Ybin) \geq 1/2\big\}
      = \Delta^{\ny-1} \cap [0,1/2]^{\ny},
    \end{equation}
    which for $\ny=3$ is the full equilateral triangle with vertices
    $(e_1+e_2)/2, (e_1+e_3)/2$ and $(e_2+e_3)/2$, where the $e_i$'s
    are the vectors of the canonical basis of $\R^{\ny}$. In general,
    the set in \eqref{eq:20230509b} is a polytope of dimension $\ny-1$
    with $2\ny$ facets and $(\ny(\ny-1)/2)$ vertices.
  \end{enumerate}
\end{remark}

The method is formally given in
Algorithm~\ref{alg:penalty-algorithm}.



\section{Two machine-learning applications}
\label{sec:applications}

In this section, we present two machine-learning applications:
  estimating the group-sparsity structure in regression problems
  (Subsection \ref{sec:GLS}) and performing data distillation
  (Subsection \ref{sec:DHC}). Firstly, we present the problem setting
  and the bilevel formulation. Secondly, we show how the problem fits
  our mathematical formulation. Finally, we provide a comparison with  relaxation and rounding using two different rounding strategies: the \emph{simple rounding} baseline, which obtains the mixed-binary solution by rounding each  variable individually to 0 or 1, and the \emph{top-k hard thresholding baseline}, where the mixed-integer solution is obtained by performing a top-$k$ hard thresholding operation on the continuous solution \footnote{which can be equivalently defined as rounding up to $1$ the largest $k$ components of the vector and $0$ the other ones.} \citep{2018_Frecon_J_p-neurips_biglasso,zhangadvancing}.
The codes are available on the GitHub page: \url{https://github.com/saraventurini/Relax-and-penalize}

\subsection{Group lasso structure}
\label{sec:GLS}

Here, we present the application of estimating group-sparsity structures, which is useful in areas such as gene expression analysis. We follow the formulation, the optimization algorithm, and the experimental setup in~\citet{2018_Frecon_J_p-neurips_biglasso}. In particular, we extend the approach by optimizing over both the hyperparameters~$\theta$ and~$\lambda$, instead of determining $\lambda$ by cross-validation. We compare our \emph{relax and penalize} strategy with
the \emph{relaxation and rounding} proposed in
~\citet{2018_Frecon_J_p-neurips_biglasso}, using both \emph{simple} and \emph{top-$k$ hard thresholding} rounding.
The datasets used in the tests are challenging variants of the synthetic datasets referenced in~\citet{2018_Frecon_J_p-neurips_biglasso}, specifically designed to create classes of differing sizes.


\paragraph{Problem setting and formulation.}
Given an output vector $y \in \R^N$ and a design matrix $X \in \R^{N\times P}$,
the group lasso problem can be formulated as follows
\begin{equation*}
  \min_{\z \in \R^P} \frac{1}{2} \norm{X  \z - y}^2+ \lambda
  \sum_{l=1}^L \norm{\y_l \odot \z}_2,
\end{equation*}
where $\lambda>0$ is a regularization parameter and $\y_l$ is a binary
vector (with entries in $\{0, 1\}$) indicating the features (components) of
$\z$ belonging to the $l$th group, meaning $\mathcal{G}_l = \{i \in
[P]\colon \y_{i,l}=1\}$, where the vectors~$\y_l$ are thought as
columns of a $P \times L$ matrix.
In the case of nonoverlapping groups, it is assumed that $\sum_{l=1}^L
\y_{j,l}=1$ for every $j \in [P]$.
In the classic literature on the topic,
the groups are assumed to be known
a priori \citep{Yuan2006, Zhao2009}, but often in practice there is no
clue about the structure of the groups and the problem is to infer
this group structure from the data.
However, this amounts to estimating the binary variables $\y_l$'s,
which in general poses a challenge.

In view of the discussion above, in the related literature a common
approach is to relax the problem allowing the $\y_l$'s to vary in the
continuum $[0,1]^{P}$. This approach was followed in
~\citet{2018_Frecon_J_p-neurips_biglasso},
in which the following bilevel optimization problem is proposed
\begin{equation}
  \label{prob:grouplasso}
  \begin{aligned}
    &\min_{\y \in \Y} \frac 1 T \sum_{t=1}^T C_t(\z_t(\x,\y))
    \quad \text{with} \quad
    \Y = \left\{ \y \in [0,1]^{P \times L} \colon \sum\limits_{l=1}^L
    \y_l = {\bf 1}_P\right\} = (\Delta^{L-1})^P
    \\
    &\text{and}\quad
    \displaystyle\z(\x,\y) = \argmin\limits_{(\z_1, \dots, \z_T) \in \R^{d \times T}}
    \frac 1 T \sum_{t=1}^T \bigg(
    \dfrac{1}{2} \norm{X_t  \z_t - y_t}^2
    + \lambda \sum_{l=1}^L \norm{\y_l \odot \z_t}_2 + \frac{\eta}{2} \norm{\z_t}^2
    \bigg).
  \end{aligned}
\end{equation}
Here, $(X_t, y_t)_{1 \leq t \leq T}$ defines $T$ regression problems in which the
regressors share the same group-sparsity structure, $C_t$ is a smooth
cost function acting as a validation error for the $t$-th task, and
$\y_l$ are thought as columns of a $P\times L$~matrix.
Note that in this formulation, $\lambda$ is supposed to be fixed
(possibly determined by a cross-validation procedure).
The regularization terms $\eta/2 \norm{\z_t}^2$,
with $\eta \ll 1$, are added to ensure uniqueness of the solution to
the lower-level problem and to devise a dual algorithmic procedure
generating a sequence $(\z^{(q)}(\x,\y))_{q \in \N}$ with smooth
updates (w.r.t.\ $\y$) such that $\z^{(q)}(\x,\y) \to \z(\x,\y)$
uniformly on $\Y$ as $q \to +\infty$; see
\citet[Section~3.2]{2018_Frecon_J_p-neurips_biglasso}.
Ultimately, the groups are
estimated by solving the problem
\begin{equation*}
  \min_{\y \in \Y} \frac 1 T \sum_{t=1}^T C_t(\z^{(q)}(\x,\y)),
\end{equation*}
with $q$ large enough, and by appropriately thresholding (a
posterior) the solution $\y$ in order to recover binary variables
$\y_l$'s.

\paragraph*{Proposed method.}
Our general \emph{relax and penalize} approach, as described in Section~\ref{sec:penalty-method}, allows us to bypass the last thresholding step and directly address
the more challenging problem
\begin{equation}
  \label{eq:20230503a}
  \min_{(\x,\y) \in \X\times \Ybin} \frac 1 T \sum_{t=1}^T
  C_t(\z^{(q)}(\x,\y))
  \quad \text{with} \quad
  \begin{cases}
    \X = [\x_{\min},\x_{\max}]\quad\text{with}\;
    0<\x_{\min}<\x_{\max},\\[1ex]
    \Ybin  = \Big\{\y \in \{0,1\}^{P\times L} \colon \forall\, j \in
    [P]\ \sum\limits_{l=1}^L \y_{j,l} = 1\Big\},\
  \end{cases}
\end{equation}
Note that in~\citet{2018_Frecon_J_p-neurips_biglasso}, $\lambda$
  is supposed to be fixed (possibly determined by a cross-validation
  procedure). Instead, here we are optimizing w.r.t.\ both the
  hyperparameters $\theta$ and $\lambda$, obtaining a mixed-binary
  problem in the end. In Section~\ref{sec:GLS_auxiliary} we report the
  details of the extension.
Now, since $\z^{(q)}(\x,\y)$ is smooth w.r.t.~$\y$,
the objective in \eqref{eq:20230503a} satisfies Assumption~\ref{ass:a1} and hence, in view of Theorem~\ref{teo:equiv} and
Theorem~\ref{teo:limitpointPk}, we can consider the \emph{relaxed and
  penalized} version of Problem~\eqref{eq:20230503a}, that is
\begin{equation}
  \label{prob:approximate2}
  \min_{(\x,\y) \in \X\times \Y}  \frac 1 T
  \sum_{t=1}^T \bigg( C_t(w^{(q)}(\x,\y)) + \frac 1 \varepsilon \varphi(\y) \bigg),
\end{equation}
and state that, if $\varepsilon$ is small enough, the two problems
\eqref{eq:20230503a} and \eqref{prob:approximate2} share the same
global minimizers. By leveraging this
equivalence, we study in the next section the added benefits of the
proposed Algorithm~\ref{alg:penalty-algorithm}.

\begin{remark}
According to \citet[Theorem~3.1]{2018_Frecon_J_p-neurips_biglasso} we
have that $\z^{(q)}(\x,\y) \to \z(\x,\y)$ as $q \to +\infty$ uniformly
on $\X\times \Ybin$, so that, similarly to
\citet[Theorem~2.1]{2018_Frecon_J_p-neurips_biglasso}, one can prove
that Problem~\eqref{eq:20230503a} converges as $q\to +\infty$ to
the problem
\begin{equation*}
  \min_{(\x,\y) \in \X\times \Ybin} \frac 1 T \sum_{t=1}^T C_t(\z(\x,\y))
\end{equation*}
in terms of optimal values and sets of global minimizers.
This provides a justification for addressing Problem~\eqref{eq:20230503a}.
\end{remark}

\paragraph{Numerical experiments.}
\label{sec:GLS_exp}
\begin{figure*}[htb]
  \centering
\input{data/heatmap}

\begin{tikzpicture}
    \begin{groupplot}[
        group style={
            group name=my plots,
            group size=5 by 1, 
            xlabels at=edge bottom,
            ylabels at=edge left,
            horizontal sep=0.5cm, 
            vertical sep=2cm, 
        },
        width=4cm, 
        height=7cm, 
        xlabel=groups,
        ylabel=features,
        colormap={bluewhite}{color=(white) rgb255=(90,96,191)},
        title style={font=\small, align=center},
        enlargelimits=false,
        axis on top,
        point meta min=0,
        point meta max=1,
        ylabel near ticks,
        y dir = reverse,
    ]
    
    \nextgroupplot[
        title=$\theta^*$ oracle,
    ]
    \addplot [matrix plot*,point meta=explicit] file {thetastar.dat};
    
    \nextgroupplot[
        title=$\theta^{r}$,
        yticklabels={}, ylabel={}, 
    ]
    \addplot [matrix plot*,point meta=explicit] file {thetar.dat};
    
    \nextgroupplot[
        title=$\bar\theta^{s}$,
        yticklabels={}, ylabel={}, 
    ]
    \addplot [matrix plot*,point meta=explicit] file {thetas.dat};

    \nextgroupplot[
        title=$\bar\theta^{top}$,
        yticklabels={}, ylabel={}, 
    ]
    \addplot [matrix plot*,point meta=explicit] file {thetatop.dat};
    
    \nextgroupplot[
        title=$\theta^{p}$,
        yticklabels={}, 
        ylabel={}, 
        colorbar,
        colorbar style={
            title=$ $,
            yticklabel style={
                /pgf/number format/.cd,
                fixed,
                precision=1,
                fixed zerofill,
            },
        },
    ]
    \addplot [matrix plot*,point meta=explicit] file {thetap.dat};
    
    \end{groupplot}
\end{tikzpicture}
  \caption{ Example of oracle group structure with \textit{random}
      sizes and parameter $a=0.5$, and the  
      corrisponding $\theta$ obtained by the \emph{relaxation and rounding} before the rounding procedure (ref.\ as $r$), after both \emph{simple} (ref.\ as $s$) and \emph{top-$1$ hard thresholding} rounding (ref.\ as $top$), and the \emph{relax and penalize} (ref.\ as $p$) methods.}
  \label{fig:1}
\end{figure*}

\setlength{\tabcolsep}{2.5pt}
\begin{table}[t]
\caption{
Test errors (mean $\pm$ standard deviation), i.e., the value of the upper level functions $G$, for the \emph{relaxation and rounding} (ref.\ as $r$), with both \emph{simple} (ref.\ as $s$) and \emph{top-1 hard thresholding} rounding (ref.\ as $top$), and the \emph{relax and penalize} (ref.\ as $p$) methods, over $3$ runs of \textit{inequal} and \textit{random} group structures, for $a \in \{0.1, 0.3, 0.5\}$.}
\label{tab:table1}
\centering
\begin{scriptsize}
\begin{NiceTabular}{l ccc  ccc}
 & \multicolumn{3}{c}{INEQUAL} & \multicolumn{3}{c}{RANDOM}\\
\cmidrule(lr){2-4} \cmidrule(lr){5-7}
 & 0.1 & 0.3 & 0.5 & 0.1 & 0.3  & 0.5\\
\midrule
$G(\lambda^{r},\theta^\star)$ & 0.04 $\pm$ 0.00 & 0.05 $\pm$ 0.00 & 0.06 $\pm$ 0.00 & 0.06 $\pm$ 0.00 & 0.07 $\pm$ 0.00 & 0.09 $\pm$ 0.00 \\  
$G(\lambda^{r},\theta^{r})$ & 0.04 $\pm$ 0.00 & 0.05 $\pm$ 0.00 & 0.06 $\pm$ 0.00 & 0.05 $\pm$ 0.00 & 0.06 $\pm$ 0.00 & 0.07 $\pm$ 0.01 \\ 
\midrule
$G(\lambda^{r},\bar\theta^{s})$ & 0.06 $\pm$ 0.02 & 0.05 $\pm$ 0.00 & 0.06 $\pm$ 0.00 & 0.12 $\pm$ 0.05 & 0.09 $\pm$ 0.03 & 0.14 $\pm$ 0.06 \\
\midrule
$G(\lambda^{r},\bar\theta^{\text{top}})$ & \pmb{$0.04 \pm 0.00$} & \pmb{$0.05 \pm 0.00$} & \pmb{$0.06 \pm 0.00$} & \pmb{$0.05 \pm 0.00$} & \pmb{$0.06 \pm 0.01$} & \pmb{$0.07 \pm 0.01$} \\ 
\midrule
\midrule
$G(\lambda^{p},\theta^\star)$ & 0.04 $\pm$ 0.00 & 0.05 $\pm$ 0.00 & 0.06 $\pm$ 0.01 & 0.06 $\pm$ 0.00 & 0.07 $\pm$ 0.00 & 0.09 $\pm$ 0.00 \\ 
\midrule
$G(\lambda^{p},\theta^{p})$ & 0.05 $\pm$ 0.00 & \pmb{$0.05 \pm 0.00$} & \pmb{$0.06 \pm 0.01$} & \pmb{$0.05 \pm 0.00$} & \pmb{$0.06 \pm 0.01$} & \pmb{$0.07 \pm 0.01$} \\ 
\bottomrule
\end{NiceTabular}
\end{scriptsize}
\vskip-1em
\end{table}

\setlength{\tabcolsep}{2.5pt}
\begin{table}[t]
\caption{
Reconstruction errors (mean $\pm$ standard deviation), i.e., Frobenius norm of the difference of the oracle regressor and the obtained regressors, for the \emph{relaxation and rounding} (ref.\ as $r$), with both \emph{simple} (ref.\ as $s$) and \emph{top-1 hard thresholding} rounding (ref.\ as $top$), and the \emph{relax and penalize} (ref.\ as $p$) methods, over 3 runs of \textit{inequal} and \textit{random} group structures, for $a \in \{0.1, 0.3, 0.5\}$.}
\label{tab:table2}
\centering
\begin{scriptsize}
\begin{NiceTabular}{l ccc  ccc}
 & \multicolumn{3}{c}{INEQUAL} & \multicolumn{3}{c}{RANDOM}\\
\cmidrule(lr){2-4} \cmidrule(lr){5-7}
 & 0.1 & 0.3 & 0.5 & 0.1 & 0.3  & 0.5\\
\midrule
$\norm{w(\lambda^{r},\theta^{\star})- w^\star}_F$ & 4.20 $\pm$ 0.21 & 5.24 $\pm$ 0.19 & 4.67 $\pm$ 0.15 & 5.34 $\pm$ 0.18 & 5.96 $\pm$ 0.16 & 6.69 $\pm$ 0.15 \\
$\norm{w(\lambda^{r},\theta^{r}) - w^\star}_F$ & 4.73 $\pm$ 0.35 & 5.64 $\pm$ 0.05 & 5.13 $\pm$ 0.22 & 5.50 $\pm$ 0.24 & 6.10 $\pm$ 0.25 & 6.86 $\pm$ 0.19 \\ 
\midrule
$\norm{w(\lambda^{r},\bar \theta^{s}) - w^\star}_F$ & 7.93 $\pm$ 1.52 & 5.99 $\pm$ 1.17 & 6.63 $\pm$ 0.16 & 11.17 $\pm$ 2.51 & 9.17 $\pm$ 2.05 & 11.95 $\pm$ 2.72 \\
\midrule
$\norm{w(\lambda^{r},\bar \theta^{\text{top}}) - w^\star}_F$ & \pmb{$5.10 \pm 0.37$} & \pmb{$5.99 \pm 0.15$} & \pmb{$5.40 \pm 0.16$} & \pmb{$5.93 \pm 0.16$} & \pmb{$6.50 \pm 0.41$} & 7.18 $\pm$ 0.25 \\
\midrule\midrule
$\norm{w(\lambda^{p},\theta^{\star})- w^\star}_F$ & 4.20 $\pm$ 0.20 & 5.32 $\pm$ 0.20 & 4.69 $\pm$ 0.15 & 5.33 $\pm$ 0.16 & 5.92 $\pm$ 0.11 & 6.75 $\pm$ 0.12 \\
\midrule
$\norm{w(\lambda^{p},\theta^{p}) - w^\star}_F$ & 5.15 $\pm$ 0.29 & 6.04 $\pm$ 0.11 & 5.54 $\pm$ 0.18 & 5.94 $\pm$ 0.17 & 6.57 $\pm$ 0.39 & \pmb{$7.17 \pm 0.23$} \\
\bottomrule
\end{NiceTabular}
\end{scriptsize}
\vskip-1em
\end{table}

The experimental setting is similar to that of
~\citet{2018_Frecon_J_p-neurips_biglasso}. Indeed, we create synthetic
datasets where each task amounts to predict an oracle regressor
$\z^\star\in\mathbb{R}^P$ and an oracle group structure $\theta^\star \in
\mathbb{R}^{P\times L}$,
such that for all $j \in [P]$ and $l \in [L]$, $\theta^\star_{j,l} = 1$ if $j$ belongs to group $l$ and $0$ otherwise.
We consider $P=100$ features, $N=20$ observations, $T=500$ tasks, and $L=10$ oracle groups.
For every task~$t\in [T]$, we generate the oracle regressor~$w^\star_t$ and the data $(X_t, y_t)$ as follows.
The regressor~$w^\star_t$ is generated such that its values are
non-zero in one group chosen at random. In particular, we study the
datasets as these values belong to the interval $[-1,-a]\cup[a,1]$ as $0<a\leq1$ varies.
We point out that in~\citet{2018_Frecon_J_p-neurips_biglasso},
$w^\star_t$ are considered binary, and the smaller $a$  is, the more
difficult the problem becomes.
The design matrix $X_t\in\mathbb{R}^{N\times P}$ is randomly drawn
according to a standard normal distribution and then normalized
column-wise.
The output $y_t$ is such that $y_t\sim\mathcal{N}(X_t w_t^\star,
0.2\mathrm{I}_D)$.
Validation and test sets are generated similarly.
Unlike~\citet{2018_Frecon_J_p-neurips_biglasso}, we allow groups of completely random sizes, with no predefined criteria. In particular, we consider two settings: the
\textit{unequal} one, with half groups of size $5$ and half groups of
size $15$, and the \textit{random} one, where the size of the groups
are allowed to be different and are calculated as follow.
We generate $L$ normally distributed random values, apply the softmax operation to obtain $L$ percentages, and use these percentages to distribute the $P$ features across the different groups.
All the results are averaged over $5$ runs.
See Section \ref{sec:GLS_auxiliary} for more details.

We compare several methods, which produce 
the following quantities: $(\lambda^{r},\theta^{r})$
obtained by \emph{a pure relaxation} method, without any rounding,
$(\lambda^{r},\bar\theta^{s})$ obtained by a \emph{simple} rounding of the relaxed solution,  $(\lambda^{r},\bar\theta^{\text{top}})$ obtained by \emph{top-1 hard thresholding} the relaxed solution, meaning assigning each feature to its dominant group, and finally
$(\lambda^{p},\theta^{p})$ obtained by the proposed \emph{relax and penalize}
method. In Figure~\ref{fig:1} we first show an example of the different
group structures retrieved by the various methods compared to the oracle group structure $\theta^\star$,
for a \textit{random} synthetic dataset with $a=0.5$.
We can notice that using directly the \emph{relaxed} solution, some values are less
than $0.5$.
As a result, after a simple rounding, some rows may end up as zeros, leaving
those features unassigned to any group.
Then, we evaluate the methods using two performance measures, which are also evaluated at the oracle group structure $\theta^\star$. The related results are reported in Table~\ref{tab:table1} and Table~\ref{tab:table2}.
Notice that $\theta^\star$, $\bar\theta^{s}$, $\bar\theta^{\text{top}}$, $\theta^{p}$ are
binary instead $\theta^{r}$ might not be so.
Overall, these preliminary experiments on a group lasso problem show that the proposed method and the \emph{relaxation and rounding method} with top-$1$ hard thresholding perform essentially the same, and they are both preferable to the relaxed and simple rounding procedure. We believe that 
the effectiveness of the top-$1$ hard thresholding is due the the fact that we are considering nonoverlapping group structures which put strong constraints on the binary variables, since it is all about identifying one nonzero entry for each feature. Finally, it is worth noting that the group lasso problems used in the study are  
randomly generated synthetic problems. As such, they appear to be less  
challenging compared to the real-world problems addressed in the next experiment.

\subsection{Data distillation}
\label{sec:DHC}

We now present the application of data distillation.
Data distillation is a process that synthesizes compact summaries of
large datasets, enabling efficient model training and
inference. Preserving essential information while reducing size allows
quicker processing and improved performance in various applications.
In \citet{sachdeva2023data}, the authors give an extensive survey on
data distillation and propose a bilevel optimization model to handle
the task (see Section~2 in \citet{sachdeva2023data}), which is the same
model described here.
We optimize the lower-level problem exactly, and the upper-level problem with a stochastic projected gradient descent.
We test the \emph{relaxation and rounding},  with both \emph{simple} and \emph{top-$k$ hard thresholding} rounding \citep{zhangadvancing}, and \emph{relax and penalize} strategies over two real datasets, showing the effectiveness of the last one.

\paragraph{Problem setting and formulation.}

Given a training dataset that needs to be distilled
$\mathcal{D}^{\text{train}}=\lbrace(x^{\text{train}}_i,y^{\text{train}}_i)\rbrace _{i =
  1}^{m}$ and a data budget $\tau \in \mathbb{Z}^{+}$, data distillation
techniques aim to synthesize a high-fidelity data summary
$\mathcal{D}^{\text{train}}_{\text{syn}}=\lbrace(x^{\text{train}}_i,y^{\text{train}}_i)\rbrace _{i
  = 1}^{\tau}$ with $\tau\ll m$.
Given a validation set
$\mathcal{D}^{\text{val}}=\lbrace(x^{\text{val}}_j,y^{\text{val}}_j)\rbrace
_{j = 1}^{n}$, the data distillation task can be formulated as the
following bilevel optimization problem
\begin{equation}
  \label{prob:datahypercleaning}
  \min_{v\in\{0,1\}^m, e^\top v=\tau} \ \ell^{\text{val}} (\zeta(v))
  \quad \text{with} \quad
  \zeta(v) = \argmin\limits_{\zeta} \
    \ell^{\text{train}}(\zeta,v) + s \norm{\zeta}^2
\end{equation}
with validation loss~$\ell^{\text{val}}$, weighted training loss
$\ell^{\text{train}} = \sum_{i=1}^m v_i
\ell(x_i^{\text{train}},y_i^{\text{train}},\zeta)$,
regularization parameter~\mbox{$s \in \mathbb{R}$}, $v$ being a binary vector
of dimension~$m$ indicating the samples in $\mathcal{D}^{\text{train}}_{\text{syn}}$,
and $e^\top v=\tau$ is the knapsack constraint to take into
account the budget.
Therefore, we wish to have $v_i = 1$ for the $\tau$ most representative
samples.

A simple approach to solve Problem~\eqref{prob:datahypercleaning}
  is to relax it by allowing the $v_i$'s to vary within the
  interval~$[0, 1]$, projecting them to the region defined by the knapsack
  constraint, and then rounding them at the end.
  In particular, in our case we suppose $x_i,x_j \in \R^{d}$ as well as
  $y_i,y_j \in \R^{e}$ and we consider $\zeta = (W,b)$ a linear model
  with weight $W$ and bias $b$ as well as $\ell^{\text{val}}$ and
  $\ell$ being mean squared error losses.
  Therefore, the aim is to solve the bilevel integer problem
\begin{equation}
  \label{prob:datadistillation}
  \begin{aligned}
    &\min_{v \in [0,1]^{m}, e^\top v=\tau} \ 
    \frac{1}{2n} \sum_{j=1}^{n} \norm{W(v)x^{\text{val}}_j + b(v) -
      y_j^{\text{val}}}^2
    \\&
    \text{with} \quad (W(v),b(v)) = \argmin\limits_{W \in \R^{e \times
        d}, b \in \R^e} \frac{1}{m} \sum_{i=1}^{m} v_i
    \norm{Wx_i^{\text{train}} + b - y_i^{\text{train}}}^2 +
    s\norm{W}^2.
  \end{aligned}
\end{equation}
We indicate with $\Y = \left\{ v \in [0,1]^{m} \colon e^\top v=r\right\}$, and with $\Ybin=\Y \cap \{0,1\}^m$. 

\paragraph{Proposed method.}

Using the approach presented in Section~\ref{sec:penalty-method},
  we avoid the final thresholding and directly optimize the following
  problem
\begin{equation}
  \label{prob:20240607}
  \begin{aligned}
    &\min_{v \in \{0,1\}^{m}, e^\top v=\tau}\  
    \frac{1}{2n} \sum_{j=1}^{n} \norm{W(v)x^{\text{val}}_j + b(v) -
      y_j^{\text{val}}}^2
    \\&
    \text{with} \quad (W(v),b(v)) = \argmin\limits_{W \in \R^{e \times
        d}, b \in \R^e} \frac{1}{m} \sum_{i=1}^{m} v_i
    \norm{Wx_i^{\text{train}} - b - y_i^{\text{train}}}^2 +
    s\norm{W}^2.
  \end{aligned}
\end{equation}
We solve the lower-level problem exactly, obtaining
\begin{equation}
\label{eq:ll}
W(v) = \left( \frac{1}{m} C_v(X,Y) \right) \left( \frac{1}{m} C_v(X) +
  s \mathrm{I}_d\right)^{-1}, \quad b(v) = \bar y_v - W(v)\bar x_v,
\end{equation}
with
\begin{equation*}
  C_v(X,Y) = \sum_{i=1}^{m} v_i (y^{\text{train}}_i-\bar
  y_v)(x^{\text{train}}_i - \bar x_v)^\top
\end{equation*}
being the cross-covariance matrix,
\begin{equation*}
  C_v(X) = \sum_{i=1}^{m} v_i (x^{\text{train}}_i-\bar
  x_v)(x^{\text{train}}_i - \bar x_v)^\top
\end{equation*}
being the covariance matrix, and
\begin{equation*}
  \bar x_v = \frac{1}{\sum_{i=1}^m v_i} \sum_{i=1}^m v_ix_i,
  \quad
  \bar y_v = \frac{1}{\sum_{i=1}^m v_i} \sum_{i=1}^m v_iy_i.
\end{equation*}
All the results are averaged over $5$ runs.
See Section \ref{sec:DHC_auxiliary} for more details.

We can also write the problem as
  \begin{equation}
    \label{eq:20240607}
    \min_{v}  \ell^{\text{val}} (\zeta(W(v),b(v)))
    \quad \text{with}\quad
    v \in \{0,1\}^m, \ e^\top v=\tau.
  \end{equation}
We solve the upper-level problem by means
of a projected stochastic gradient descent method. To efficiently
perform the projection over the knapsack constraint, we use the Kiwiel
algorithm \citep{kiwiel2008breakpoint}; see Section
\ref{sec:DHC_auxiliary} for further details.
Now, since $W(v)$ and $b(v)$ are smooth w.r.t.~$v$,
the objective in \eqref{eq:20240607} satisfies Assumption~\ref{ass:a1}
and hence, in view of Theorem~\ref{teo:equiv} and
Theorem~\ref{teo:limitpointPk}, we can consider the \emph{relaxed and
  penalized} version of Problem~\eqref{eq:20240607}, i.e.,
\begin{equation}
  \label{eq:approximate3}
  \min_{v} \ell^{\text{val}} (\zeta(w(v),b(v))) + \frac 1
  \varepsilon \varphi(v)
  \quad \text{with}\quad
  v \in [0,1]^m, \ e^\top v = \tau
\end{equation}
and state that, if $\varepsilon$ is small enough, the two problems
\eqref{eq:20240607} and \eqref{eq:approximate3} share the same
global minimizers.

\paragraph{Numerical experiments.}
\label{sec:DHC_exp}

\setlength{\tabcolsep}{2.5pt}
\begin{table}[t]
\caption{
Experiments on the regression task across two real-world datasets: \emph{music} and \emph{blog}. The distillation budget $\tau$ is varied at $10\,\%$, $20\,\%$, and $30\,\%$ (\emph{perc} column) of the training set size. For the \emph{relaxation and rounding}, with both \emph{simple} (simple) and  \emph{top-$k$ hard thresholding} rounding (top-k), and the \emph{relax and penalize} (\emph{penalize}) methods, we report (mean $\pm$ standard deviation over 5 runs) the values of the upper level function (before rounding in brackets) for the validation and test sets, the cardinality of the synthesized set, and the RMSE.}
\label{tab:table3}
\centering
\begin{scriptsize}
\begin{NiceTabular}{lccl|cccc}
dataset & \emph{perc}& $\tau$ & method & $\ell^{\text{val}}$ & $\left|D^{\text{train}}_{\text{syn}}\right|$ & $\ell^{\text{test}}$ & RMSE  \\
\midrule
    \emph{music} & 10 & 23186 & simple & 126.85 $\pm$ 1.37 (58.57 $\pm$ 0.02) & 12764.20 $\pm$ 28.63 & 128.61 $\pm$ 1.37 & 16.04 $\pm$ 0.09 \\ 
        ~ & ~ & ~ & top-k & 65.60 $\pm$ 0.32  (58.57 $\pm$ 0.02) & 23186.00 $\pm$ 0.00 & 67.02 $\pm$ 0.30 & 11.58 $\pm$ 0.03 \\ 
    ~ & ~ & ~ & penalize & \pmb{$64.87 \pm 0.70$} & \pmb{$23186.00 \pm 0.00$} & \pmb{$65.96 \pm 0.08$} & \pmb{$11.49 \pm 0.01$} \\ 
        ~ & 20 & 46371 & simple & 84.84 $\pm$ 0.40 (53.37 $\pm$ 0.01) & 24967.00 $\pm$ 47.02 & 86.19 $\pm$ 0.41 & 13.13 $\pm$ 0.03 \\ 
        ~ & ~ & ~ & top-k & 54.61 $\pm$ 0.02 (53.37 $\pm$ 0.01) & 46371.00 $\pm$ 0.00 & 55.82 $\pm$ 0.03 & 10.57 $\pm$ 0.00 \\ 
    ~ & ~ & ~ & penalize & \pmb{$54.20 \pm 0.02$} & \pmb{$46371.00 \pm 0.00$} & \pmb{$55.45 \pm 0.03$} & \pmb{$10.53 \pm 0.00$} \\ 
        ~ & 30 & 69557 & simple & 64.91 $\pm$ 0.11 (50.93 $\pm$ 0.00) & 38250.20 $\pm$ 23.68 & 66.15 $\pm$ 0.11 & 11.50 $\pm$ 0.01 \\ 
        ~ & ~ & ~ & top-k & 50.71 $\pm$ 0.02 (50.93 $\pm$ 0.00)  & 69557.00 $\pm$ 0.00 & 51.91 $\pm$ 0.02 & 10.19 $\pm$ 0.00 \\ 
    ~ & ~ & ~ & penalize & \pmb{$50.52 \pm 0.01$} & \pmb{$69557.00 \pm 0.00$} & \pmb{$51.69 \pm 0.01$} & \pmb{$10.17 \pm 0.00$} \\ \midrule
    \emph{blog} & 10 & 5240 & simple & 2890.77 $\pm$ 240.03 (297.17 $\pm$ 0.10) & 515.00 $\pm$ 5.52 & 3048.34 $\pm$ 276.60 & 78.02 $\pm$ 3.47 \\ 
        ~ & ~ & ~ & top-k & 373.36 $\pm$ 5.20 (297.17 $\pm$ 0.10) & 5240.00 $\pm$ 0.00 & 287.98 $\pm$ 4.36 & 24.00 $\pm$ 0.18 \\ 
    ~ & ~ & ~ & penalize & \pmb{$326.64 \pm 23.61$} & \pmb{$5240.00 \pm 0.00$} & \pmb{$217.87 \pm 1.74$} & \pmb{$20.87 \pm 0.08$} \\ 
        ~ & 20 & 10479 & simple & 2209.31 $\pm$ 35.05 (292.37 $\pm$ 0.01) & 712.80 $\pm$ 4.02 & 2283.30 $\pm$ 41.51 & 67.57 $\pm$ 0.61 \\ 
        ~ & ~ & ~ & top-k & 306.08 $\pm$ 1.76 (292.37 $\pm$ 0.01) & 10479.00 $\pm$ 0.00 & 219.44 $\pm$ 1.76 & 20.95 $\pm$ 0.08 \\ 
    ~ & ~ & ~ & penalize & \pmb{$296.77 \pm 0.34$} & \pmb{$10479.00 \pm 0.00$} & \pmb{$207.90 \pm 0.43$} & \pmb{$20.39 \pm 0.02$} \\ 
        ~ & 30 & 15719 & simple & 13972.08 $\pm$ 11310.87 (327.78 $\pm$ 29.11) & 10447.00 $\pm$ 23265.78 & 15134.38 $\pm$ 10849.34 & 156.81 $\pm$ 84.26 \\ 
        ~ & ~ & ~ & top-k & 360.31 $\pm$ 27.59 (327.78 $\pm$ 29.11) & 15719.00 $\pm$ 0.00 & 297.70 $\pm$ 80.29 & 24.23 $\pm$ 3.24 \\ 
    ~ & ~ & ~ & penalize & \pmb{$352.68 \pm 2.16$} & \pmb{$15719.00 \pm 0.00$} & \pmb{$247.87 \pm 1.11$} & \pmb{$22.27 \pm 0.05$} \\
\bottomrule
\end{NiceTabular}
\end{scriptsize}
\vskip-1em
\end{table}

We tested the proposed method on the data distillation problem
  by performing experiments on two real-world datasets.
  First, \emph{music} \citep{year_prediction_msd_203}, a dataset with song
features from 1922 to 2011 used to predict the release year based on
90 attributes, including timbre averages and covariances.
Second, \emph{blog} \citep{blogfeedback_304}, a dataset containing features from blog posts, focused on predicting the number of comments received in the next 24~hours using various attributes.
For each dataset, we address Problem~\eqref{prob:datadistillation} by
using a training set for the lower level and a validation set for the
upper level.
We vary the distillation budget $\tau$ at $10\,\%$, $20\,\%$, $30\,\%$ of
the training set size; see Section~\ref{sec:DHC_auxiliary} for more
details.
We run the experiments with the \emph{relaxation and rounding}, with both \emph{simple} and \emph{top-$k$ hard thresholding} rounding, and the
\emph{relax and penalize} strategies, and we report the results in
Table \ref{tab:table3}. First, we compare the final objective values
of the upper-level problem obtained from \emph{relaxation and
  rounding}, with both rounding strategies, alongside those from \emph{relax and penalize}, including the actual number of training
samples selected in $D^{\text{train}}_{\text{syn}}$.
Next, we evaluate both methods by optimizing the same upper level
function on a test set using the selected $D^{\text{train}}_{\text{syn}}$
sets, reporting the objective value and the root mean square error (RMSE), a common metric for assessing regression model performance.
Similar to the group lasso structures discussed in Section
\ref{sec:GLS}, the \emph{relaxation and rounding},  with both \emph{simple} and \emph{top-$k$ hard thresholding} rounding methodologies, initially achieves a low objective
function value at the upper level, but this value increases after rounding. In contrast, the \emph{relax
  and penalize} method successfully finds an integer solution with a
low objective function value. Additionally, the penalty method  selects a number of training samples close to the budget, while the \emph{relaxation and rounding}     strategy results in a relaxed solution with only a few components above 0.5. The \emph{simple} rounding hence leads to fewer samples and ultimately not utilizing the full
budget, while the rounding based on \emph{top-$k$ hard thresholding} naturally leads to a solution that satisfies the budget constraint. When applying the selected $D^{\text{train}}_{\text{syn}}$
to a test set, the \emph{relax and penalize} method always achieves lower
values for the objective function and RMSE, leading to a better
overall solution with respect to \emph{relaxation and rounding}, both \emph{simple} and \emph{top-k}.
In Section \ref{sec:DHC_auxiliary4}, we report the evolution along the different steps of the quantities reported in Table \ref{tab:table3}  for one setting.

\section{Conclusion}
\label{sec:conc}

In this paper, we studied the idea of relaxing the integrality
constraints and using a penalty term to handle mixed-binary bilevel
optimization problems arising in hyperparameter tuning of machine
learning systems. Besides a result
concerning the equivalence in terms of global minimizers, sufficient
conditions for identifying mixed-binary local minimizers are
stated. These theoretical results naturally lead to devise a penalty
method that is, under suitable assumptions, guaranteed to provide
mixed-binary solutions.
The reported numerical results highlight that our method is competitive with state-of-the-art approaches based on relaxation and rounding for the group lasso problem and outperforms these methods on the data distillation task.

\bibliography{ref}

\begin{thebibliography}{29}
\providecommand{\natexlab}[1]{#1}
\providecommand{\url}[1]{\texttt{#1}}
\expandafter\ifx\csname urlstyle\endcsname\relax
  \providecommand{\doi}[1]{doi: #1}\else
  \providecommand{\doi}{doi: \begingroup \urlstyle{rm}\Url}\fi

\bibitem[Arbel \& Mairal(2022)Arbel and Mairal]{Arbel2021b}
Michael Arbel and Julien Mairal.
\newblock Amortized implicit differentiation for stochastic bilevel
  optimization.
\newblock \emph{International Conference on Learning Representations (ICLR)},
  2022.

\bibitem[Bae \& Grosse(2020)Bae and Grosse]{bae2020delta}
Juhan Bae and Roger~B. Grosse.
\newblock Delta-stn: Efficient bilevel optimization for neural networks using
  structured response jacobians.
\newblock \emph{Advances in Neural Information Processing Systems},
  33:\penalty0 21725--21737, 2020.

\bibitem[Bennett et~al.(2006)Bennett, Hu, Ji, Kunapuli, and
  Pang]{bennett2006model}
Kristin~P Bennett, Jing Hu, Xiaoyun Ji, Gautam Kunapuli, and Jong-Shi Pang.
\newblock Model selection via bilevel optimization.
\newblock In \emph{The 2006 IEEE International Joint Conference on Neural
  Network Proceedings}, pp.\  1922--1929. IEEE, 2006.

\bibitem[Bertin-Mahieux(2011)]{year_prediction_msd_203}
Thierry Bertin-Mahieux.
\newblock {Year Prediction MSD}.
\newblock UCI Machine Learning Repository, 2011.
\newblock {DOI}: https://doi.org/10.24432/C50K61.

\bibitem[Bonnans \& Shapiro(2000)Bonnans and Shapiro]{bonnans2000}
J.Fr\'ed\'eric Bonnans and Alexander Shapiro.
\newblock \emph{Perturbation Analysis of Optimization Problems}.
\newblock Springer-Verlag, New York, 2000.

\bibitem[Borsos et~al.(2024)Borsos, Mutn{{\'y}}, Tagliasacchi, and
  Krause]{borsosetal2024}
Zal{{\'a}}n Borsos, Mojm{{\'i}}r Mutn{{\'y}}, Marco Tagliasacchi, and Andreas
  Krause.
\newblock Data summarization via bilevel optimization.
\newblock \emph{Journal of Machine Learning Research}, 25\penalty0
  (73):\penalty0 1--53, 2024.
\newblock URL \url{http://jmlr.org/papers/v25/21-1132.html}.

\bibitem[Buza(2014)]{blogfeedback_304}
Krisztian Buza.
\newblock {BlogFeedback}.
\newblock UCI Machine Learning Repository, 2014.
\newblock {DOI}: https://doi.org/10.24432/C58S3F.

\bibitem[Defazio et~al.(2014)Defazio, Bach, and
  Lacoste-Julien]{defazio2014saga}
Aaron Defazio, Francis Bach, and Simon Lacoste-Julien.
\newblock Saga: A fast incremental gradient method with support for
  non-strongly convex composite objectives.
\newblock \emph{Advances in Neural Information Processing Systems}, 27, 2014.

\bibitem[Dontchev \& Zolezzi(1993)Dontchev and Zolezzi]{Dontchev93}
Asen~L. Dontchev and Tullio Zolezzi.
\newblock \emph{Well-posed optimization problems}.
\newblock Springer-Verlag, Berlin., 1993.

\bibitem[Franceschi et~al.(2018)Franceschi, Frasconi, Salzo, Grazzi, and
  Pontil]{franceschi2018bilevel}
Luca Franceschi, Paolo Frasconi, Saverio Salzo, Riccardo Grazzi, and
  Massimiliano Pontil.
\newblock Bilevel programming for hyperparameter optimization and
  meta-learning.
\newblock In \emph{International Conference on Machine Learning}, pp.\
  1568--1577. PMLR, 2018.

\bibitem[Franceschi et~al.(2019)Franceschi, Niepert, Pontil, and
  He]{franceschi19a}
Luca Franceschi, Mathias Niepert, Massimiliano Pontil, and Xiao He.
\newblock Learning discrete structures for graph neural networks.
\newblock In Kamalika Chaudhuri and Ruslan Salakhutdinov (eds.),
  \emph{Proceedings of the 36th International Conference on Machine Learning},
  volume~97 of \emph{Proceedings of Machine Learning Research}, pp.\
  1972--1982. PMLR, 09--15 Jun 2019.

\bibitem[Frecon et~al.(2018)Frecon, Salzo, and
  Pontil]{2018_Frecon_J_p-neurips_biglasso}
Jordan Frecon, Saverio Salzo, and Massimiliano Pontil.
\newblock Bilevel learning of the group lasso structure.
\newblock In S.~Bengio, H.~Wallach, H.~Larochelle, K.~Grauman, N.~Cesa-Bianchi,
  and R.~Garnett (eds.), \emph{Advances in Neural Information Processing
  Systems 31}, pp.\  8301--8311. Curran Associates, Inc., 2018.

\bibitem[Giannessi \& Niccolucci(1976)Giannessi and
  Niccolucci]{giannessi1976connections}
Franco Giannessi and F~Niccolucci.
\newblock Connections between nonlinear and integer programming problems.
\newblock In \emph{Symposia Mathematica}, volume~19, pp.\  161--176. Academic
  Press New York, 1976.

\bibitem[Grazzi et~al.(2020)Grazzi, Franceschi, Pontil, and Salzo]{grazzi20a}
Riccardo Grazzi, Luca Franceschi, Massimiliano Pontil, and Saverio Salzo.
\newblock On the iteration complexity of hypergradient computation.
\newblock In Hal~Daumé III and Aarti Singh (eds.), \emph{Proceedings of the
  37th International Conference on Machine Learning}, volume 119 of
  \emph{Proceedings of Machine Learning Research}, pp.\  3748--3758. PMLR,
  13--18 Jul 2020.

\bibitem[Kalantari \& Rosen(1987)Kalantari and Rosen]{kalantari1987penalty}
Bahman Kalantari and J~Ben Rosen.
\newblock Penalty formulation for zero-one nonlinear programming.
\newblock \emph{Discrete Applied Mathematics}, 16\penalty0 (2):\penalty0
  179--182, 1987.

\bibitem[Kiwiel(2008)]{kiwiel2008breakpoint}
Krzysztof~C. Kiwiel.
\newblock Breakpoint searching algorithms for the continuous quadratic knapsack
  problem.
\newblock \emph{Mathematical Programming}, 112:\penalty0 473--491, 2008.

\bibitem[Kleinert et~al.(2021{\natexlab{a}})Kleinert, Grimm, and
  Schmidt]{Kleinert_et_al:2021b}
Thomas Kleinert, Veronika Grimm, and Martin Schmidt.
\newblock Outer approximation for global optimization of mixed-integer
  quadratic bilevel problems.
\newblock \emph{Mathematical Programming (Series B)}, 2021{\natexlab{a}}.
\newblock \doi{10.1007/s10107-020-01601-2}.

\bibitem[Kleinert et~al.(2021{\natexlab{b}})Kleinert, Labbé, Ljubi\'c, and
  Schmidt]{Kleinert_et_al:2021c}
Thomas Kleinert, Martine Labbé, Ivana Ljubi\'c, and Martin Schmidt.
\newblock A survey on mixed-integer programming techniques in bilevel
  optimization.
\newblock \emph{EURO Journal on Computational Optimization}, 9:\penalty0
  100007, 2021{\natexlab{b}}.
\newblock ISSN 2192-4406.
\newblock \doi{10.1016/j.ejco.2021.100007}.

\bibitem[Lucidi \& Rinaldi(2010)Lucidi and Rinaldi]{Lucidi_2010}
Stefano Lucidi and Francesco Rinaldi.
\newblock Exact penalty functions for nonlinear integer programming problems.
\newblock \emph{Journal of optimization theory and applications}, 145\penalty0
  (3):\penalty0 479--488, 2010.

\bibitem[Maclaurin et~al.(2015)Maclaurin, Duvenaud, and Adams]{maclaurin15}
Dougal Maclaurin, David Duvenaud, and Ryan Adams.
\newblock Gradient-based hyperparameter optimization through reversible
  learning.
\newblock In Francis Bach and David Blei (eds.), \emph{Proceedings of the 32nd
  International Conference on Machine Learning}, volume~37 of \emph{Proceedings
  of Machine Learning Research}, pp.\  2113--2122, Lille, France, 07--09 Jul
  2015. PMLR.

\bibitem[Mitsos(2010)]{Mitsos:2010}
Alexander Mitsos.
\newblock Global solution of nonlinear mixed-integer bilevel programs.
\newblock \emph{Journal of Global Optimization}, 47\penalty0 (4):\penalty0
  557--582, 2010.
\newblock \doi{10.1007/s10898-009-9479-y}.

\bibitem[Pedregosa(2016)]{pedregosa16}
Fabian Pedregosa.
\newblock Hyperparameter optimization with approximate gradient.
\newblock In Maria~Florina Balcan and Kilian~Q. Weinberger (eds.),
  \emph{Proceedings of The 33rd International Conference on Machine Learning},
  volume~48 of \emph{Proceedings of Machine Learning Research}, pp.\  737--746,
  New York, New York, USA, 20--22 Jun 2016. PMLR.

\bibitem[Raghavachari(1969)]{raghavachari1969connections}
Madabhushi Raghavachari.
\newblock On connections between zero-one integer programming and concave
  programming under linear constraints.
\newblock \emph{Operations Research}, 17\penalty0 (4):\penalty0 680--684, 1969.

\bibitem[Sachdeva \& McAuley(2023)Sachdeva and McAuley]{sachdeva2023data}
Noveen Sachdeva and Julian McAuley.
\newblock Data distillation: A survey.
\newblock \emph{Transactions on Machine Learning Research}, 2023.
\newblock ISSN 2835-8856.
\newblock URL \url{https://openreview.net/forum?id=lmXMXP74TO}.
\newblock Survey Certification.

\bibitem[Wang et~al.(2020)Wang, Chen, Zheng, Xu, Gong, and Chen]{Wang2020}
Yingjie Wang, Hong Chen, Feng Zheng, Chen Xu, Tieliang Gong, and Yanhong Chen.
\newblock Multi-task additive models for robust estimation and automatic
  structure discovery.
\newblock In H.~Larochelle, M.~Ranzato, R.~Hadsell, M.F. Balcan, and H.~Lin
  (eds.), \emph{Advances in Neural Information Processing Systems}, volume~33,
  pp.\  11744--11755. Curran Associates, Inc., 2020.

\bibitem[Yuan \& Lin(2006)Yuan and Lin]{Yuan2006}
Ming Yuan and Yi~Lin.
\newblock Model selection and estimation in regression with grouped variables.
\newblock \emph{Journal of the Royal Statistical Society: Series B (Statistical
  Methodology)}, 68:\penalty0 49--67, 2006.

\bibitem[Zhang et~al.(2022)Zhang, Yao, Ram, Zhao, Chen, Hong, Wang, and
  Liu]{zhangadvancing}
Yihua Zhang, Yuguang Yao, Parikshit Ram, Pu~Zhao, Tianlong Chen, Mingyi Hong,
  Yanzhi Wang, and Sijia Liu.
\newblock Advancing model pruning via bi-level optimization.
\newblock In \emph{Advances in Neural Information Processing Systems}, 2022.

\bibitem[Zhao et~al.(2009)Zhao, Rocha, and Yu]{Zhao2009}
Peng Zhao, Guilherme Rocha, and Bin Yu.
\newblock {The composite absolute penalties family for grouped and hierarchical
  variable selection}.
\newblock \emph{The Annals of Statistics}, 37\penalty0 (6A):\penalty0 3468 --
  3497, 2009.

\bibitem[Zhou et~al.(2024)Zhou, Jiang, and Shen]{zhou2024learning}
Bo~Zhou, Ruiwei Jiang, and Siqian Shen.
\newblock Learning to solve bilevel programs with binary tender.
\newblock In \emph{The Twelfth International Conference on Learning
  Representations}, 2024.

\end{thebibliography}
\bibliographystyle{tmlr}

\appendix
\section{Auxiliary results}
\label{sec:auxiliary}

In this section, we give a number of technical results related to the
penalty function $\varphi$ used throughout the paper. We recall that
$\varphi\colon[0,1]^{\ny}\to \R$ is defined as
\begin{equation*}
  \varphi(\y) = \sum_{i=1}^{\ny} \y_i(1 - \y_i).
\end{equation*}
Moreover, we denote by $[\ny]$ the set $\{1,\dots, \ny\}$ and by
$\norm{\cdot}$ the Euclidean norm in $\R^{\ny}$. Occasionally, we will
also used the norms
\begin{equation*}
\norm{\y}_1 = \sum_{i=1}^{\ny} \abs{\y_i}
\quad\text{and}\quad
\norm{\y}_\infty = \max_{1 \leq i \leq \ny}\abs{\y_i}.
\end{equation*}

\begin{lemma}
  \label{lem1}
  Let $\psi\colon [0,1]\to \R$ be such that
  \begin{equation*}
    \forall\, t \in [0,1]\colon\quad \psi(t) = t(1-t).
  \end{equation*}
  Let $\sigma \in \left]0,1/2\right]$.
  Then, for every $t_1, t_2 \in [0,1]$ we have
  \begin{equation*}
    \bigg\lvert \frac{t_1+t_2}{2} - \frac{1}{2} \bigg\rvert \geq
    \sigma \implies \abs{\psi(t_2)-\psi(t_1)} \geq 2\sigma
    \abs{t_2 - t_1}.
  \end{equation*}
\end{lemma}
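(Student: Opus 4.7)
The plan is to reduce the claim to a single algebraic identity, after which the hypothesis applies by inspection. First I would expand the difference $\psi(t_2) - \psi(t_1) = (t_2 - t_2^2) - (t_1 - t_1^2)$ and factor it as
\[
\psi(t_2) - \psi(t_1) = (t_2 - t_1) - (t_2^2 - t_1^2) = (t_2 - t_1)\bigl(1 - (t_1 + t_2)\bigr).
\]
Taking absolute values gives the key identity
\[
\abs{\psi(t_2) - \psi(t_1)} = \abs{t_2 - t_1}\cdot\abs{1 - (t_1 + t_2)}.
\]

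Next I would rewrite the second factor in the form that matches the hypothesis by pulling out a factor of $2$:
\[
\abs{1 - (t_1 + t_2)} = 2\,\biggl\lvert \frac{1}{2} - \frac{t_1 + t_2}{2} \biggr\rvert = 2\,\biggl\lvert \frac{t_1 + t_2}{2} - \frac{1}{2} \biggr\rvert.
\]
Under the assumption that $\lvert (t_1+t_2)/2 - 1/2 \rvert \geq \sigma$, this immediately yields $\abs{1 - (t_1 + t_2)} \geq 2\sigma$, and substituting back into the identity gives $\abs{\psi(t_2) - \psi(t_1)} \geq 2\sigma \abs{t_2 - t_1}$, as required.

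There is no real obstacle here: the statement is a direct algebraic consequence of the factorization $\psi(t_2) - \psi(t_1) = (t_2 - t_1)(1 - t_1 - t_2)$, and the restriction $\sigma \in \left]0, 1/2\right]$ plays no active role in the derivation beyond ensuring the hypothesis is not vacuous (since $\lvert (t_1+t_2)/2 - 1/2\rvert$ cannot exceed $1/2$ on $[0,1]$). The proof is essentially one display of algebra followed by a substitution.
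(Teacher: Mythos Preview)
Your proof is correct and essentially identical to the paper's: both rely on the factorization $\psi(t_2)-\psi(t_1)=(t_2-t_1)(1-(t_1+t_2))$, rewrite $\abs{1-(t_1+t_2)}=2\lvert (t_1+t_2)/2 - 1/2\rvert$, and then apply the hypothesis directly.
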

\begin{proof}
  Let $t_1, t_2 \in [0,1]$. One can easily check that
  \begin{equation*}
    \psi(t_2)-\psi(t_1) = (1- (t_1+t_2))(t_2 - t_1).
  \end{equation*}
  Therefore,
  \begin{equation*}
    \abs{\psi(t_2)-\psi(t_1)} = \abs{t_1+t_2 - 1}\abs{t_2 - t_1}
    = 2 \bigg\lvert \frac{t_1+t_2}{2} - \frac{1}{2}\bigg\rvert \abs{t_2 - t_1}
    \geq 2 \sigma \abs{t_2 - t_1}.
    \qedhere
  \end{equation*}
\end{proof}

\begin{lemma}
  \label{lem2}
  Let $\sigma \in \left]0,1/2\right]$ and $\y, \y^\prime \in
  [0,1]^{\ny}$ be such that the following holds:
  \begin{equation*}
    \forall\, i \in [\ny]\colon\ \y^\prime_i \neq \y_i\
    \implies
    \bigg\lvert \frac{\y_i+\y_i^\prime}{2} - \frac 1 2 \bigg\rvert \geq
    \sigma
    \quad\text{and}\quad
    \bigg\lvert \y_i^\prime - \frac 1 2\bigg\rvert \leq \bigg\lvert \y_i
    - \frac 1 2\bigg\rvert.
  \end{equation*}
  Then, $\varphi(\y^\prime)-\varphi(\y) \geq 2 \sigma \norm{\y^\prime -
    \y}$.
\end{lemma}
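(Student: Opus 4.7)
The plan is to reduce the claim to a coordinate-wise estimate and then invoke Lemma~\ref{lem1}. Writing $\psi(t)=t(1-t)$, we have $\varphi(\y')-\varphi(\y)=\sum_{i=1}^{\ny}\bigl(\psi(\y'_i)-\psi(\y_i)\bigr)$, and the key observation is the identity $\psi(t)=\tfrac14-(t-\tfrac12)^2$, which shows that $\psi$ is a strictly decreasing function of $|t-\tfrac12|$ on $[0,1]$. So for each index $i$ one can try to prove the pointwise lower bound $\psi(\y'_i)-\psi(\y_i)\geq 2\sigma\,\abs{\y'_i-\y_i}$ and then sum.

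For the pointwise bound I would split into two cases. If $\y'_i=\y_i$, both sides are $0$ and there is nothing to do. If $\y'_i\neq\y_i$, the second hypothesis $\bigl|\y'_i-\tfrac12\bigr|\leq \bigl|\y_i-\tfrac12\bigr|$ combined with $\psi(t)=\tfrac14-(t-\tfrac12)^2$ gives $\psi(\y'_i)\geq\psi(\y_i)$, so $\psi(\y'_i)-\psi(\y_i)=\bigl|\psi(\y'_i)-\psi(\y_i)\bigr|$. The first hypothesis $\bigl|\tfrac{\y_i+\y'_i}{2}-\tfrac12\bigr|\geq\sigma$ is exactly the premise of Lemma~\ref{lem1} with $t_1=\y_i$, $t_2=\y'_i$, which then yields $\bigl|\psi(\y'_i)-\psi(\y_i)\bigr|\geq 2\sigma\,\abs{\y'_i-\y_i}$. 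Combining, $\psi(\y'_i)-\psi(\y_i)\geq 2\sigma\,\abs{\y'_i-\y_i}$ in both cases.

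Summing over $i\in[\ny]$ gives $\varphi(\y')-\varphi(\y)\geq 2\sigma\sum_{i=1}^{\ny}\abs{\y'_i-\y_i}=2\sigma\,\norm{\y'-\y}_1$, and since $\norm{\cdot}_1\geq\norm{\cdot}$ on $\R^{\ny}$, the desired inequality follows. The only delicate point---and the sole reason the second hypothesis is present---is the removal of the absolute value in Lemma~\ref{lem1} with the correct sign; this is exactly what the monotonicity of $\psi$ in $|t-\tfrac12|$ guarantees once we know $|\y'_i-\tfrac12|\leq|\y_i-\tfrac12|$.
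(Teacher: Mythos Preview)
Your proof is correct and follows essentially the same route as the paper: reduce to a coordinate-wise estimate, use the second hypothesis to justify $\psi(\y'_i)-\psi(\y_i)=\lvert\psi(\y'_i)-\psi(\y_i)\rvert$, apply Lemma~\ref{lem1} via the first hypothesis, then sum and pass from the $\ell_1$-norm to the Euclidean norm. The only cosmetic difference is that you make the monotonicity of $\psi$ in $\lvert t-\tfrac12\rvert$ explicit via the identity $\psi(t)=\tfrac14-(t-\tfrac12)^2$, whereas the paper simply asserts $\psi(\y_i)\leq\psi(\y'_i)$.
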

\begin{proof}
  Let $\y,\y^\prime \in [0,1]^{\ny}$ be as in the statement and let $I =
  \{i \in [\ny] \,\vert\, \y_i \neq \y^\prime_i\}$.
  Then,
  \begin{equation*}
    \forall\, i \in I\colon\ \
    \bigg\lvert \frac{\y_i+\y_i^\prime}{2} - \frac 1 2 \bigg\rvert \geq \sigma
    \quad\text{and}\quad \psi(\y_i) \leq \psi(\y^\prime_i)
  \end{equation*}
  and hence, by Lemma~\ref{lem1}, we have
  \begin{align*}
    \varphi(\y^\prime) - \varphi(\y)
    &= \sum_{i \in I} \psi(\y^\prime_i) - \psi(\y_i)
    \\
    &= \sum_{i \in I} \abs{\psi(\y^\prime_i) - \psi(\y_i)} \geq 2 \sigma
      \sum_{i \in I} \abs{\y^\prime_i - \y_i}
    \\
    & = 2 \sigma \norm{\y^\prime - \y}_1 \geq 2\sigma \norm{\y^\prime - \y},
  \end{align*}
  where we used $\norm{\cdot} \leq \norm{\cdot}_1$ for the last
  inequality.
\end{proof}

\begin{remark}
  Lemma~\ref{lem2} says that if the components of the mid point
  ($\y+\y^\prime)/2$ are bounded away from~$1/2$ and the componentwise
  distance from $\y$ to $1/2$ is larger than that of $\y^\prime$ to $1/2$,
  then $\varphi(\y^\prime)- \varphi(\y)$ can be bounded
  from below by $\norm{\y^\prime-\y}$; up to a multiplicative
  constant.
\end{remark}

\begin{remark}
  \label{rmk1}
  The conditions on $\y$ and $\y^\prime$ required by Lemma~\ref{lem2}
  are satisfied if
  \begin{equation*}
    \forall\, i \in [\ny]\colon\ \y_i\neq \y^\prime_i\ \implies \
    \begin{cases}
      (\y_i - 1/2)(\y^\prime_i - 1/2)\geq 0,\\
      \abs{\y^\prime_i - 1/2}\leq\abs{\y_i - 1/2},\\
      2\sigma \leq \abs{\y_i - 1/2}.
    \end{cases}
  \end{equation*}
  Indeed, in such case we have
  \begin{align*}
    \bigg\lvert \frac{\y_i+\y_i^\prime}{2} - \frac 1 2 \bigg\rvert
    & = \frac{1}{2} \abs{\y_i + \y_i^\prime - 1}
    \\
    & = \frac 1 2 \bigg\lvert \y_i - \frac 1 2 + \y_i^\prime - \frac1 2
      \bigg\rvert \overset{(*)}{=} \frac 1 2 \bigg(\bigg\lvert \y_i -
      \frac 1 2  \bigg\rvert + \bigg\lvert \y^\prime_i - \frac 1 2
      \bigg\rvert \bigg)
      \geq \frac 1 2 \bigg\lvert \y_i - \frac 1 2  \bigg\rvert \geq \sigma,
  \end{align*}
  where the equality in $(*)$ is due to the fact that
  \begin{equation*}
    (\y_i - 1/2)(\y^\prime_i - 1/2) \geq 0, \
    \abs{\y^\prime_i - 1/2} \leq \abs{\y_i - 1/2}
    \implies
    \big( \y_i \leq \y^\prime_i \leq 1/2\ \ \text{or}\ \ \y_i \geq
    \y_i^\prime \geq 1/2 \big).
  \end{equation*}
\end{remark}

\begin{corollary}
\label{cor2}
Let $\sigma \in \left]0,1/2\right]$ and $\y \in \{0,1\}^{\ny}$.
Then
\begin{equation*}
  \forall\, \y^\prime \in [0,1]^{\ny}\colon\ \ \norm{\y^\prime - \y}
  \leq 1 - 2\sigma\ \implies \ \varphi(\y^\prime) \geq 2\sigma
  \norm{\y^\prime - \y}.
\end{equation*}
\end{corollary}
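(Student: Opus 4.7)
The plan is to apply Lemma~\ref{lem2} directly, with the role of $\y$ in that lemma played by the binary vector $\y$ from the statement, and the role of $\y^\prime$ played by $\y^\prime \in [0,1]^{\ny}$. Since $\y \in \{0,1\}^{\ny}$ gives $\varphi(\y) = 0$, the conclusion $\varphi(\y^\prime) - \varphi(\y) \geq 2\sigma \norm{\y^\prime - \y}$ of Lemma~\ref{lem2} would immediately yield the corollary.

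The task thus reduces to verifying the two hypotheses of Lemma~\ref{lem2} at every index $i$ such that $\y^\prime_i \neq \y_i$. The condition $\lvert \y^\prime_i - 1/2\rvert \leq \lvert \y_i - 1/2\rvert$ is immediate and actually unconditional on $i$, because $\y_i \in \{0,1\}$ forces $\lvert \y_i - 1/2\rvert = 1/2$, which is the maximum of $\lvert \cdot - 1/2\rvert$ on $[0,1]$; so this inequality holds for every $\y^\prime_i \in [0,1]$ automatically.

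The substantive step is the first condition, $\lvert (\y_i + \y^\prime_i)/2 - 1/2\rvert \geq \sigma$, and this is where the hypothesis $\norm{\y^\prime - \y} \leq 1 - 2\sigma$ will be used. I would split into the two possible values of $\y_i$. If $\y_i = 0$, then $\lvert (\y_i + \y^\prime_i)/2 - 1/2\rvert = (1 - \y^\prime_i)/2$, and the componentwise bound $\y^\prime_i = \lvert \y^\prime_i - \y_i\rvert \leq \norm{\y^\prime - \y} \leq 1 - 2\sigma$ yields $(1 - \y^\prime_i)/2 \geq \sigma$. If $\y_i = 1$, then $\lvert (\y_i + \y^\prime_i)/2 - 1/2\rvert = \y^\prime_i/2$, and $\lvert \y^\prime_i - 1\rvert \leq 1 - 2\sigma$ gives $\y^\prime_i \geq 2\sigma$, hence $\y^\prime_i/2 \geq \sigma$.

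There is no real obstacle here; the only delicate point is that the constant $1 - 2\sigma$ in the hypothesis is exactly calibrated so that the componentwise inequality $\lvert \y^\prime_i - \y_i\rvert \leq 1 - 2\sigma$ (which is a consequence of $\norm{\cdot} \geq \norm{\cdot}_\infty$) suffices to push $\y^\prime_i$ away from $1/2$ by at least $\sigma$ on the appropriate side. Once the two hypotheses of Lemma~\ref{lem2} are verified, the corollary follows by invoking that lemma together with $\varphi(\y) = 0$.
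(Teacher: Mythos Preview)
Your proposal is correct and follows essentially the same route as the paper's own proof: both verify the hypotheses of Lemma~\ref{lem2} by noting that $\lvert \y_i - 1/2\rvert = 1/2$ makes the second condition automatic, then split on $\y_i \in \{0,1\}$ and use $\norm{\cdot}_\infty \leq \norm{\cdot}$ to obtain $\lvert \y^\prime_i - \y_i\rvert \leq 1 - 2\sigma$, which is exactly what is needed for the first condition. The paper phrases the case analysis as a single identity $\lvert (\y_i + \y^\prime_i)/2 - 1/2\rvert = \tfrac12(1 - \lvert \y^\prime_i - \y_i\rvert)$, but the content is the same.
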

\begin{proof}
  Let $\y \in \{0,1\}^{\ny}$ and $\y^\prime \in [0,1]^{\ny}$.
  We will check the conditions in Lemma~\ref{lem2}.
  Let $i \in [\ny]$. Since $\abs{\y_i - 1/2} = 1/2$, the condition
  $\abs{\y^\prime - 1/2} \leq \abs{\y_i- 1/2}$ is automatically satisfied.
  Now, we note that
  \begin{align*}
    \y_i
    &=0 \implies \bigg\lvert \frac{\y_i+\y_i^\prime}{2} - \frac 1 2
      \bigg\rvert = \bigg\lvert \frac{\y_i^\prime}{2} - \frac 1 2
      \bigg\rvert = \frac 1 2 \abs{\y^\prime_i - 1} = \frac1 2 (1 -
      \abs{\y^\prime_i - \y_i}),
    \\
    \y_i
    & = 1 \implies \bigg\lvert \frac{\y_i+\y_i^\prime}{2} - \frac 1
      2 \bigg\rvert = \bigg\lvert \frac{\y_i^\prime}{2} \bigg\rvert =
      \frac 1 2 \abs{\y^\prime_i} = \frac1 2 (1 - \abs{\y^\prime_i -
      \y_i}).
  \end{align*}
  Therefore,
  \begin{equation*}
    \bigg\lvert \frac{\y_i+\y_i^\prime}{2} - \frac 1 2 \bigg\rvert \geq \sigma
    \iff
    1 - \abs{\y^\prime_i - \y_i} \geq 2\sigma
    \iff
    \abs{\y^\prime_i - \y_i} \leq 1 - 2\sigma
  \end{equation*}
  and the first condition in Lemma~\ref{lem2} is then equivalent to
  the condition $\norm{\y^\prime - \y}_{\infty} \leq 1 - 2\sigma$.
  The statement follows by recalling that $\norm{\cdot}_\infty \leq
  \norm{\cdot}$.
\end{proof}

\begin{remark}
It is clear from the proof of Corollary~\ref{cor2} that, in fact, it holds
\begin{equation*}
  \forall\, \y^\prime \in [0,1]^{\ny}\colon\ \ \norm{\y^\prime - \y}_{\infty}
  \leq 1 - 2\sigma\ \implies \ \varphi(\y^\prime) \geq 2\sigma
  \norm{\y^\prime - \y}.
\end{equation*}
\end{remark}

\begin{lemma}
  \label{lemma:varphi}
  Let~$\y\in \{0,1\}^{\ny}$, $\bar \y\in [0,1]^{\ny}$, and $c \in \R$ be such that
  $\norm{\y-\bar \y}_{\infty} < c <\frac 1 2$.
  Let \[\y^t = (1-t)\bar \y + t\y \quad \text{with} \quad t\in [0,1].\]
  Then,
\begin{equation*}
\varphi(\bar \y) - \varphi(\y^t) \geq (1-2c) \norm{ \y^t-\bar \y}.
\end{equation*}
\end{lemma}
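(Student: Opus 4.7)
The plan is to exploit the separable structure $\varphi(\y)=\sum_{i=1}^{\ny}\psi(\y_i)$ with $\psi(s)=s(1-s)$ and the factorization $\psi(a)-\psi(b)=(1-(a+b))(a-b)$ already used in the proof of Lemma~\ref{lem1}. Setting $\sigma=(1-2c)/2\in\,]0,1/2]$ (which is well-defined since $c<1/2$), I would aim to show that for each coordinate~$i$ with $\bar\y_i\neq\y^t_i$,
\begin{equation*}
    \psi(\bar\y_i)-\psi(\y^t_i)=\bigl(1-(\bar\y_i+\y^t_i)\bigr)(\bar\y_i-\y^t_i)\geq (1-2c)\,\abs{\bar\y_i-\y^t_i},
\end{equation*}
summing over~$i$ and then using $\norm{\cdot}_1\geq\norm{\cdot}$ to conclude.

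The key step is a coordinatewise case analysis driven by $\y_i\in\{0,1\}$ and the hypothesis $\norm{\y-\bar\y}_{\infty}<c<1/2$. If $\y_i=0$, then $\bar\y_i\in[0,c)$ and $\y^t_i=(1-t)\bar\y_i$, hence $\bar\y_i+\y^t_i=(2-t)\bar\y_i\in[0,2c)$, which gives $1-(\bar\y_i+\y^t_i)>1-2c>0$; simultaneously $\bar\y_i-\y^t_i=t\bar\y_i\geq 0$. If $\y_i=1$, then $\bar\y_i\in(1-c,1]$ and $\y^t_i=\bar\y_i+t(1-\bar\y_i)$, hence $\bar\y_i+\y^t_i=(2-t)\bar\y_i+t>2-2c+tc>1$, which gives $1-(\bar\y_i+\y^t_i)<-(1-2c)<0$; simultaneously $\bar\y_i-\y^t_i=-t(1-\bar\y_i)\leq 0$. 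In both cases the two factors have the same sign and the absolute value of the first factor is at least $1-2c$, so $(1-(\bar\y_i+\y^t_i))(\bar\y_i-\y^t_i)\geq(1-2c)\abs{\bar\y_i-\y^t_i}$ as required.

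Once this pointwise bound is in hand, summing gives $\varphi(\bar\y)-\varphi(\y^t)\geq(1-2c)\norm{\bar\y-\y^t}_1\geq(1-2c)\norm{\y^t-\bar\y}$, which is the claim. Alternatively, the same conclusion could be obtained by invoking Lemma~\ref{lem2} with $\sigma=(1-2c)/2$ and $(\y,\y')=(\y^t,\bar\y)$, after verifying its two hypotheses via the same case analysis: since $\y^t_i$ lies between $\bar\y_i$ and~$\y_i\in\{0,1\}$, it is further from~$1/2$ than~$\bar\y_i$, yielding $\abs{\bar\y_i-1/2}\leq\abs{\y^t_i-1/2}$; and the explicit midpoint computations above give $\abs{(\bar\y_i+\y^t_i)/2-1/2}\geq(1-2c)/2=\sigma$.

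The main obstacle is minor: it is purely bookkeeping the signs in the two cases and checking that $c<1/2$ is used precisely to keep $1-(\bar\y_i+\y^t_i)$ bounded away from zero with the sign dictated by $\y_i$. Once the signs are correct, the lower bound by $(1-2c)\abs{\bar\y_i-\y^t_i}$ is immediate, and the passage from the $\ell_1$ bound to the $\ell_2$ bound is standard.
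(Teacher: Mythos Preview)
Your proof is correct and follows essentially the same approach as the paper. The paper verifies the two hypotheses of Lemma~\ref{lem2} (with $\sigma=\tfrac{1}{2}-c$) via a coordinatewise case analysis on $\y_i\in\{0,1\}$ and then invokes that lemma, whereas you carry out the same case analysis directly on the factorization $\psi(a)-\psi(b)=(1-(a+b))(a-b)$; you even note the Lemma~\ref{lem2} route as an alternative, which is exactly what the paper does.
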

\begin{proof}
  Since $\lvert \y_i - \bar \y_i\rvert < c < \frac 1 2$ holds for all
  $i=1,\ldots, n$, we have
  \[
    \frac 1 2 = \bigg\lvert \y_i - \frac 1 2\bigg\rvert \leq
    \lvert \y_i - \bar \y_i\rvert + \bigg\lvert \bar \y_i - \frac 1
    2\bigg\rvert < c + \bigg\lvert \bar \y_i - \frac 1 2\bigg\rvert,
  \]
  which implies
  \[\bigg\lvert \bar \y_i - \frac 1 2\bigg\rvert > \frac 1 2 - c.\]
  Moreover, since $\y_i\in \{0,1\}$ and $\lvert \bar \y_i - \y_i\rvert <
  c < \frac 1 2$, we have
  \begin{align*}
    \y_i = 0 & \implies \y_i = 0 \leq \bar \y_i < c < \frac 1 2, \\
    \y_i = 1 & \implies \frac 1 2 < 1-c < \bar \y_i \leq 1 = \y_i,
  \end{align*}
  and, hence, since $\y_i^t$ is between $\y_i$ and $\bar \y_i$, it holds
  \[
    \bigg\lvert \y_i^t - \frac 1 2\bigg\rvert \geq \bigg\lvert \bar \y_i - \frac 1 2\bigg\rvert > \frac 1 2 - c
  \]
  so that
  \begin{align*}
    \bigg\lvert \frac{\y_i^t + \bar \y_i}{2} - \frac 1 2\bigg\rvert &= \bigg\lvert \frac 1 2 (\y_i^t + \bar \y_i -1)\bigg\rvert = \frac 1 2 \bigg\lvert \bigg(\y_i^t - \frac 1 2\bigg) + \bigg(\bar \y_i - \frac 1 2\bigg)\bigg\rvert \\
    &= \frac 1 2 \bigg(\bigg\lvert \y_i^t - \frac 1 2 \bigg\rvert + \bigg\lvert \bar \y_i - \frac 1 2\bigg\rvert\bigg) \geq
    \bigg\lvert \bar \y_i - \frac 1 2\bigg\rvert > \frac 1 2 -c.
  \end{align*}

  Therefore, by Lemma~\ref{lem2},
  $\varphi(\bar \y) - \varphi(\y^t) \geq (1-2c) \| \y^t-\bar \y\|$
  holds.
\end{proof}

\section{Proof of Theorem~\ref{teo:equiv}}

For the sake of brevity, we set
\begin{equation*}
  S = \argmin\limits_{(\y,\x) \in \W} G(\y,\x)
  \quad\text{and}\quad
S(\varepsilon) = \argmin_{(\y,\x) \in \X \times \Y}G(\y,\x)+\frac{1}{\varepsilon} \varphi(\y).
\end{equation*}
Recall that $\Ybin = \Y\cap \{0,1\}^{\ny}$.
  Let $\rho \in \left]0,1\right[$ and let $\hat{\varepsilon} \in \left]0,
    (1-\rho)/L\right[$.
  We define the open set
  \begin{equation*}
    U = \bigcup_{\y \in \Ybin} B_\rho(\y),
  \end{equation*}
where $\rho$ is chosen small enough so to ensure
that $\Y\setminus U \neq \varnothing$.\footnote{This means that
  there exists a $\y^* \in \Y$ such that for every $\y \in \Ybin$, it holds
  $\norm{\y - \y^*}>\rho$.
  This condition is met if we pick $\y^* \in \Y\setminus\{0,1\}^{\ny}$
  (see Assumption \ref{ass:a1}\ref{ass:theta}) and (taking into
  account that $\Ybin=\Y\cap\{0,1\}^{\ny}$ is a finite set) choose
  $\rho$ such that $0<\rho<\inf_{\y \in \Ybin} \norm{\y - \y^*}$.}
  Let $\bar{\y}$ be a minimizer of $\varphi$ over the compact
  set~$\Y\setminus U$.
  Then, clearly
  \begin{equation}
        \label{eq:20230427a}
    \forall\, \y^\prime \in \Y\setminus U\colon \quad
    \varphi(\y^\prime) \geq \varphi(\bar{\y})>0.
  \end{equation}
  (Note that, since $\bar{\y} \notin U$, then $\bar{\y} \notin
  \{0,1\}^{\ny}$, and hence there exists $i \in [\ny]$ such that
  $\bar{\y}_i \in \left]0,1\right[$, which implies that
  $\varphi(\bar{\y}) \geq \bar{\y}_i (1 - \bar{\y}_i)>0$.)
  Thus, since
  \begin{equation*}
    \lim_{\varepsilon\to 0^+} \frac{1}{\varepsilon} \varphi(\bar{\y}) =
    +\infty,
  \end{equation*}
  there exists $\tilde{\varepsilon} \in
  \left]0,\hat{\varepsilon}\right]$ such that
  \begin{equation}
    \label{eq:20230218b}
    \forall\, \varepsilon \in \left]0,\tilde{\varepsilon}\right]\colon
    \quad
    \frac{1}{\varepsilon} \varphi(\bar{\y})
    > \sup_{\X \times\Ybin} G - \inf_{\X\times(\Y\setminus U)} G.
  \end{equation}
  Now, we let $\varepsilon \in \left]0,\tilde{\varepsilon}\right]$ and
  show that $S(\varepsilon) \subset \Ybin$.
  Let $(\x^*, \y^*) \in S(\varepsilon)$ and suppose, by
  contradiction, that $(\x^*, \y^*) \notin \Ybin$.
  We can have the following two cases:
  \begin{enumerate}
  \item[(a)] Let $\y^* \in U$. Then, there exists $\y \in
    \Ybin$ such that $\y^* \in \Y\cap B_\rho(\y)$. Thus, in view
    of Assumption~\ref{ass:a1} and Corollary~\ref{cor2}, we have
    \begin{align*}
       G(\x^*,\y)-G(\x^*, \y^*)
      \leq \
L \norm{\y^* - \y} < \frac{1-\rho}{\varepsilon}\norm{\y^* - \y}
        \leq  \frac{1}{\varepsilon} \varphi(\y^*)
        =
        \frac{1}{\varepsilon} \varphi(\y^*)
        - \frac{1}{\varepsilon} \varphi(\y)
    \end{align*}
    and, hence,
    \begin{equation*}
      G(\x^*,\y) + \frac{1}{\varepsilon} \varphi(\y) < G(\x^*,
      \y^*)+\frac{1}{\varepsilon} \varphi(\y^*),
    \end{equation*}
    which yields a contradiction since $(\x^*, \y^*)$ is a global
    minimizer of Problem~\eqref{prob:ref}.
  \item[(b)] Let $\y^* \notin U$.
    Then $(\x^*, \y^*) \in \X\times(\Y\setminus U)$ and hence, recalling \eqref{eq:20230427a} and \eqref{eq:20230218b},
    we have
    \begin{align*}
      G(\x^*, \y^*) + \frac{1}{\varepsilon} \varphi(\y^*)
      &\geq \inf_{\X\times(\Y\setminus U)} G +
        \frac{1}{\varepsilon} \varphi(\y^*)
      \\
      & \geq \inf_{\X\times(\Y\setminus U)}
        G + \frac{1}{\varepsilon} \varphi(\bar{\y})
      \\
      & > \sup_{\X\times\Ybin} G \\
      &\geq G(\y,\x) + \frac{1}{\varepsilon}
        \underbrace{\varphi(\y)}_{=0}
    \end{align*}
    for any $(\y,\x)\in \X\times\Ybin \subseteq \X\times\Y$, which gives again a contradiction.
  \end{enumerate}
  Thus, in both cases we get a contradiction and therefore necessarily
  \mbox{$(\x^*, \y^*) \in \W$}.
  Now, if we take $(\x^*,\y^*) \in S(\varepsilon)$, since
  $(\x^*,\y^*) \in \W$, we have
  \begin{equation*}
    G(\x^*,\y^*) + \frac{1}{\varepsilon} \underbrace{\varphi(\y^*)}_{=0} \leq
    G(\y,\x) +
    \frac{1}{\varepsilon}\underbrace{\varphi(\y)}_{=0}
    \quad\forall\,(\y,\x)\in \W\subseteq \X\times\Y.
  \end{equation*}
  Thus, for all $(\y,\x) \in \W$, $G(\x^*,\y^*) \leq G(\y,\x)$, meaning $(\x^*,\y^*) \in
  S$.
  Vice versa, let $(\x^*,\y^*) \in S$. Choosing $(\tilde \x^*,\tilde \y^*) \in
  S(\varepsilon)$, since $(\tilde \x^*,\tilde \y^*) \in \W$,
  we have
  \begin{align*}
    G(\x^*,\y^*) + \frac{1}{\varepsilon} \underbrace{\varphi(\y^*)}_{=0} & =
    G(\x^*,\y^*) \leq G(\tilde \x^*,\tilde \y^*)\\
    &\leq  G(\tilde \x^*,\tilde \y^*) + \frac{1}{\varepsilon} \varphi(\tilde{\y}^*)
    = \min_{(\y,\x)) \in \X\times\Y} G(\y,\x) + \frac{1}{\varepsilon}\varphi(\y).
  \end{align*}
  Hence, $(\x^*,\y^*) \in S(\varepsilon)$.

\section{Details on the group lasso application}
\label{sec:GLS_auxiliary}

In this section, we report on the details regarding the
  group-sparsity structure estimation in regression problems presented
  in Section~\ref{sec:GLS}.
Firstly, we discuss the extension of the algorithm presented in~\citet{2018_Frecon_J_p-neurips_biglasso} to the mix-integer
case. Secondly, we report the details of the performed experiments.

\subsection{Extensions to \citep{2018_Frecon_J_p-neurips_biglasso}}

In~\citet{2018_Frecon_J_p-neurips_biglasso},
  Problem~\eqref{eq:20230503a} is considered only in the integer hyperparameter $\theta$, while the real hyperparameter $\lambda$ is supposed to be fixed.
  Here, we report the extension to the
  optimization in both the hyperparameters $\theta$ and $\lambda$.
  In particular,~\citet{2018_Frecon_J_p-neurips_biglasso} do not solve
  the lower-level problem exactly, rather consider the following
  approximate problem, providing conditions under which it converges to
  the exact one as the number of inner iterations $q$ grows:
\begin{equation*}
\min_{(\y,\x) \in \X\times \Y} \mathcal{U}^{(q)} (\y,\x)
\quad \text{with} \quad
\begin{cases}
  u^{(0)} \equiv 0 \in \mathbb{R}^{P \times L}, \\
  \forall i=0,1,\ldots,q-1: \ u^{(i+1)}(\y,\x) = \mathcal{A}(u^{(i)}(\y,\x),\y,\x), \\
  w^{(q)} (\y,\x) = \mathcal{B}(u^{(q)}(\y,\x),\y,\x), \\
  \mathcal{U}^{(q)} (\y,\x) = \frac 1 T \sum_{t=1}^T C_t(w^{(q)}(\y,\x))
\end{cases}
\end{equation*}
and  $\mathcal{A}: \mathbb{R}^{P \times L} \times \X \times \Y
\rightarrow \mathbb{R}^{P \times L}$ as well as $\mathcal{B}:
\mathbb{R}^{P \times L} \times \X \times \Y \rightarrow
\mathbb{R}^{P}$. We denote by $\partial_1 \mathcal{A}(u,\y,\x)$ the
partial derivatives of $\mathcal{A}$ with respect the variable $u$ and
$\partial_2 \mathcal{A}(u,\y,\x)$ the partial derivatives of
$\mathcal{A}$ with respect the variables $\x$ and $\y$.
The same
notation is used for the partial derivatives of $\mathcal{B}$.
When specializing to the case of group lasso, $\mathcal{A}$ and
$\mathcal{B}$ take the expression reported in Section~B.1 of~\citet{2018_Frecon_J_p-neurips_biglasso} supplementary material:
\begin{align*}
  \mathcal{A}(u,\y,\x)
  & = \nabla\Phi_{\x}^* (\nabla\Phi_{\x}(u) + \gamma
    A_{\theta}\mathcal{B}(u,\y,\x)),
  \\
  \mathcal{B}(u,\y)
  & = \nabla f^* (-A_{\theta}^\top u)
\end{align*}
with $\Phi_{\x}^*$
being the separable Hellinger-like function as defined in
Definition~3.2 in~\citet{2018_Frecon_J_p-neurips_biglasso}, $\gamma
>0$ is some given step-size, $f^*$ is the Fenchel conjugate of $f$, and $A_{\theta}^\top$
is the transpose of the operator~$A_{\theta}$ as defined in
Problem~3.1 and Problem~3.2 in~\citet{2018_Frecon_J_p-neurips_biglasso}.
Therefore, noticing that the dependence on~$\y$ is hidden in
$\Phi_{\x}$, we only need to update $\partial_2 \mathcal{A}(u,\y,\x)$,
because $\mathcal{B}$ does not depend on $\y$ and $\partial_1
\mathcal{A}(u,\y,\x)$ is the derivative by $u$.
We recall that, for every $u = (u_l)_{1\geq l \leq L} \in
\mathbb{R}^{P \times L}$ and $v = (v_l)_{1\geq l \leq L} \in
\mathbb{R}^{P \times L}$ for every $l=1,\ldots,L$,
\begin{equation*}
  \nabla_l\Phi(u) = \nabla\phi(u_l) =
  \frac{u_l}{\sqrt{\x^2-\norm{u_l}_2^2}}
  \quad \text{and} \quad
  \nabla_l\Phi^*(v) = \nabla\phi^*(v_l) =
  \frac{v_l}{\sqrt{\x^2-\norm{v_l}_2^2}}
\end{equation*}
holds.
Therefore, we obtain
\begin{equation*}
  \mathcal{A}^{(l)}(u,\y,\x)
  = \x \left( \frac{v_l(\x)}{\sqrt{1+\norm{v_l(\x)}_2^2}} \right)
\end{equation*}
and
\begin{equation*}
 \partial_{\x}\mathcal{A}^{(l)}(u,\y,\x) = \frac{v_l(\x) + \x
   v_l'(\x)}{\sqrt{1+\norm{v_l(\x)}_2^2}} - \frac{\x v_l(\x)\langle
   v_l(\x),v_l'(\x)\rangle}{(1+\norm{v_l(\x)}_2^2)^{\frac{3}{2}}}
\end{equation*}
with
\begin{equation*}
  v_l(\x) = \frac{1}{\sqrt{1+\norm{u_l}_2^2}} \cdot u_l + \gamma \theta_l \odot \mathcal{B}(u,\y) \quad \text{and} \quad
  v'_l(\x) = - \frac{\x}{(1+\norm{u_l}_2^2)^{\frac{3}{2}}} \cdot u_l.
\end{equation*}
For clarity, we re-write $v_l(\x)$ and $v'_l(\x)$ as
\begin{equation*}
  v_l(\x) =  \iota \cdot u_l + d_l,
  \quad
  v'_l(\x) = \kappa \cdot u_l
\end{equation*}
with
\begin{equation*}
  \iota = \frac{1}{\sqrt{1+\norm{u_l}_2^2}},
  \quad \kappa =  \frac{-\x}{(1+\norm{u_l}_2^2)^{\frac{3}{2}}},
  \quad d_l = \gamma \theta_l \odot \mathcal{B}(u,\y).
\end{equation*}
Our aim is to compute $\partial_2\mathcal{A}(u,\y,\x)^\top a$ with $a
\in \mathbb{R}^{P\times L}$, where the partial derivative is with
respect to all the hyperparameters, meaning the group of variables
$(\theta,\lambda)$. Considering that
\begin{equation*}
\forall\,(b,\beta)\in \mathbb{R}^{P\times L}\times\R\colon \quad
\partial_2\mathcal{A}(u,\y,\x)(b,\beta) =
\partial_{\y}\mathcal{A}(u,\y,\x) b +
\partial_{\x}\mathcal{A}(u,\y,\x) \beta,
\end{equation*}
we have
\begin{align}
\begin{split}
\label{eq:2}
\langle \partial_2\mathcal{A}(u,\y,\x)(b,\beta),a \rangle &=
 \langle \partial_{\y}\mathcal{A}(u,\y,\x) b,a \rangle +
\beta \langle \partial_{\x}\mathcal{A}(u,\y,\x),a \rangle
\\ &=
 \langle  b,\partial_{\y}\mathcal{A}(u,\y,\x)^\top a \rangle +
 \langle \partial_{\x}\mathcal{A}(u,\y,\x),a \rangle \cdot  \beta
 \\ &=
  \langle  (b,\beta), \left(
  \partial_{\y}\mathcal{A}(u,\y,\x)^\top a, \langle\partial_{\x}\mathcal{A}(u,\y,\x),a\rangle \right) \rangle.
\end{split}
\end{align}
It follows that
\begin{equation*}
\partial_2\mathcal{A}(u,\y,\x)^\top a =
\left(
\partial_{\y}\mathcal{A}(u,\y,\x)^\top a ,
\langle\partial_{\x}\mathcal{A}(u,\y,\x),a\rangle
\right).
\end{equation*}
Thanks to this result, we can use Algorithm 2 in~\citet{2018_Frecon_J_p-neurips_biglasso} extended to the optimization
in both hyperparameters to the hypergradient computation, substituting the expression of $\partial_2 \mathcal{A}(u^{(i)}(\y,\x),\y,\x)^\top
a^{(i+1)}$ in the calculation of $b^{(i)}$ (with $i$ iteration index); see Algorithm
\ref{alg:hypergradient}.
In particular, in our case, we have an additional component of
$b^{(i)}$ regarding the hyperparameter $\x$, that we call $\beta^{(i)}
\in \mathbb{R}$.
Therefore,
\begin{align*}
  \beta^{(i)}
  &= \langle \partial_{\x} \mathcal{A}(u^{(i)}(\y,\x),\y,\x),
  a^{(i+1)} \rangle + \beta^{(i+1)} \\
  &=
  \sum_{l = 1}^{L} \langle \partial_{\x}
  \mathcal{A}^{(l)}(u^{(i)}(\y,\x),\y,\x) , a^{(i+1)}_l \rangle +
  \beta^{(i+1)}
\end{align*}
holds with
\begin{equation*}
\langle \partial_{\x} \mathcal{A}^{(l)}(u^{(i)}(\y,\x),\y,\x) , a^{(i+1)}_l \rangle =
\frac{\langle v_l(\x) + \x v'_l(\x) , a^{(i+1)}_l \rangle}{\sqrt{1+\norm{v_l(\x)}^2}} -
\frac{\x \langle v_l(\x),v'_l(\x) \rangle \langle v_l(\x),a_l^{(i+1)} \rangle}{(1+\norm{v_l(\x)})^{\frac{3}{2}}}.
\end{equation*}

\begin{algorithm2e}[t]
  \DontPrintSemicolon
  \textbf{Require:} Group structure $\y$, number of inner iterations $q$.\\
  \hspace{0.3cm}Initialize $u^{(0)}(\y,\x) \equiv 0 \in \mathbb{R}^{P\times L}$\\
  \hspace{0.3cm}\textbf{for} $i=1$ to $q$ \textbf{do}\\
  \hspace{0.6cm} $u^{(i)}(\y,\x) = \mathcal{A}(u^{(i-1)}(\y,\x),\y,\x)$.\\
  \hspace{0.3cm} \textbf{end for}\\
  \textbf{Output:} 1. $u^{(0)}(\y,\x),\ldots,u^{(q)}(\y,\x)$, $w^{(q)}(\y,\x) = \mathcal{B}(u^{(q)}(\y,\x),\y)$.\\
  \hspace{0.3cm} Initialize $a_q = \partial_1\mathcal{B}(u^{(q)}(\y,\x),\y,\x)^\top \nabla C(x^{(q)}(\y,\x)$,$b_q = \partial_2\mathcal{B}(u^{(q)}(\y,\x),\y,\x)^\top \nabla C(x^{(q)}(\y,\x)$.\\
  \hspace{0.3cm}\textbf{for} $i=q-1$ to $0$ \textbf{do}\\
  \hspace{0.6cm} $a^{(i)} = \partial_1 \mathcal{A}(u^{(i)}(\y,\x),\y,\x)^\top a^{(i+1)}$ \\
  \hspace{0.6cm} $b^{(i)} = \partial_{\y} \mathcal{A}(u^{(i)}(\y,\x),\y,\x)^\top a^{(i+1)} + b^{(i+1)}$ \\
  \hspace{0.6cm} $\beta^{(i)} = \langle \partial_{\x} \mathcal{A}(u^{(i)}(\y,\x),\y,\x), a^{(i+1)} \rangle + \beta^{(i+1)} $ \\
  \hspace{0.3cm}\textbf{end for}\\
  \textbf{Output:} 2. Hypergradients
  $\nabla_{\y} \mathcal{U}^{(q)}(\y,\x) = b^{(0)}$,
  $\nabla_{\x} \mathcal{U}^{(q)}(\y,\x) =\beta^{(0)}$.\\
  \caption{Hypergradient computation (reverse mode)}
  \label{alg:hypergradient}
\end{algorithm2e}

\subsection{Experimental setup}
\label{sec:GLS_setup}


\setlength{\tabcolsep}{2.5pt}
\begin{table}[t]
\caption{Ablation study hyperparameter $\beta$ for the setting inequal group size and $a = 0.3$.}
\label{tab:table11}
\centering
\begin{scriptsize}
\begin{NiceTabular}{l ccc ccc}
$\beta$ & outer iter & $G(\lambda^{p},\theta^{p})$ & $\norm{w(\lambda^{p},\theta^{p}) - w^\star}_F$\\
\midrule
        0.1 & 4.33 $\pm$ 0.58 & 0.06 $\pm$ 0.00 & 5.86 $\pm$ 0.17 \\ 
        0.3 & 7.00 $\pm$ 0.00 & 0.05 $\pm$ 0.00 & 5.71 $\pm$ 0.14 \\ 
        0.5 & $\pmb{11.33 \pm 0.58}$ & $\pmb{0.05 \pm 0.00}$ & $\pmb{5.54 \pm 0.11}$ \\ 
        0.7 & 18.00 $\pm$ 0.00 & 0.06 $\pm$ 0.03 & 5.73 $\pm$ 0.68 \\ 
        0.9 & 47.00 $\pm$ 5.29 & 0.07 $\pm$ 0.03 & 5.87 $\pm$ 1.02 \\  
\bottomrule
\end{NiceTabular}
\end{scriptsize}
\vskip-1em
\end{table}

\begin{figure*}[htb]
  \centering
  \input{data/distinf}

\begin{tikzpicture}
\begin{groupplot}[
        group style={
            group name=dataset,
            columns=1,
            rows=1,
		},
        width = 5.5cm, 
        height = 5cm,
        axis x line*=bottom,
        xtick align=center,
        ytick align=center,
        ymajorgrids,
		grid style={draw=gray!50},
        y axis line style={draw=none},
		ytick style={draw=none},
        yticklabel style={xshift=-1em},
        every node near coord/.style={font=\scriptsize, yshift=-1.6em},
		y tick label style={
        /pgf/number format/.cd,
            fixed,
            precision=3,
        /tikz/.cd
    	},
        xticklabel style={rotate=35, anchor=east, yshift=-5pt, xshift=5pt, font=\footnotesize\sffamily},
        yticklabel style={font=\footnotesize\sffamily, xshift=10pt},
        ylabel style={font=\footnotesize\sffamily, align=center},
        xmin=0,xmax=30000,
        ymax=1,ymin=-0.1,
        xtick distance=1000, 
        xticklabels={0,,,,,0.5,,,,,1.0,,,,,1.5,,,,,2.0,,,,,2.5,,,,,3.0},
        x tick scale label style={yshift=10pt}
        ]
\nextgroupplot[]
    \addplot [thick, color=gray] table[x=x,y=y]{\GroupLasso};
\end{groupplot}
\end{tikzpicture}
  \caption{Evolution of $\dist_\infty (\theta^k, \Ybin)$ for the setting with inequal group sizes and $a = 0.3$.} 
  \label{fig:12}
\end{figure*}

In Section \ref{sec:GLS_exp}, we conduct experiments on synthetic datasets for the group lasso problem. 
Regarding the details of the experiments, we implement the framework of Algorithm~\ref{alg:penalty-algorithm} by solving a sequence of outer optimization problems $P^k$ of type \eqref{prob:approximate2} w.r.t.~$(\x,\y)$. We select $\varepsilon^0=10^5$, such that the penalty term does not dominate the function, making the evaluation at the first point behave similarly to the unconstrained version. In order to better exploit the hyperparameter $\beta$, we perform an ablation study for the setting with inequal group sizes and $a = 0.3$. From Table \ref{tab:table11}, we can notice that a high $\beta$ 
leads to higher number of external iterations and a longer running time. Across the different $\beta$, the algorithm seems to find different local minima.  The best performances, in terms of test error and reconstruction errors, are reached with $\beta=0.5$, which we used for all the experiments. The stopping criterion is the condition $\dist_\infty(\theta^k, \{0,1\}^{P\times L})<tol$ with $tol=10^{-2}$ tolerance. The lower-level problem in~\eqref{prob:grouplasso} is solved using Algorithm~1 described in ~\citet{2018_Frecon_J_p-neurips_biglasso} stochastically, setting the batch size to $1$ as in their paper. Therefore, at each iteration, we consider one $w_t$, $\eta=10^{-3}$, $q=500$ inner iterations, and $0.99\,\eta / \lambda$ as the inner step size. For the upper-level optimization in \eqref{prob:approximate2}, we utilize SAGA~\citep{defazio2014saga} and 
we distribute the inner iterations for each external iteration as follows: $[5000, 5000, 2500, 2500, 2500, 2500, 2500, 1000, 1000, \ldots ]$. The step size is set to $T / 0.025 $ for $\theta$ and it is multiplied by the preconditioner $c=10^{-4}$ for $\lambda$. The hyperparameters $\theta$ and $\lambda$ are projected to the unit simplex  $(\Delta^{L-1})^P$ and the box $[10^{-3},1]$, respectively, and they are initialized to $\lambda^0=10^{-1}$ and $\theta^0 = \mathcal{P}_{\Theta}(L^{-1}\mathrm{I}_{P\times L} + \mathcal{N}(0_{P\times L},0.1L^{-1}\mathrm{I}_{P\times L} )$. 
For a fair comparison, in Figure~\ref{fig:1} and Tables~\ref{tab:table1} and \ref{tab:table2} we run all the methods with the same parameters and the same amount of total iterations. Here we report also
 in Figure \ref{fig:12} the evolution of the quantity $\dist_\infty (\theta^k, \Ybin)$ along the iterations for the \emph{relax and penalize} method, showing that for $k$ sufficiently large it abruptly reaches $0$ as soon as it is below $\frac{1}{2}$ as predicted in Remark \ref{remark3}.


\section{Details on the data distillation application}
\label{sec:DHC_auxiliary}

In this section, we report the calculations regarding the data distillation issue presented in Section~\ref{sec:DHC}.
Firstly, we present the calculations to solve exactly the lower-level
problem. Secondly, we report the calculations of the gradient of the
upper-level problem needed to perform the stochastic gradient
descent. Finally, we report the details of the experiments that were performed.

\subsection{Lower-level calculations}

The lower-level problem that we want to solve in \eqref{prob:20240607}
is given by
\begin{equation}
  \label{eq:1b}
  \min_{W,b} \quad
  \frac{1}{m} \sum_{i=1}^{m} v_i \norm{Wx_i^{\text{train}} + b -
    y_i^{\text{train}}}^2 + s\norm{W}^2
\end{equation}
with $x_i^{\text{train}} \in \mathbb{R}^d$, $y_i^{\text{train}} \in \mathbb{R}^e$,
$W \in \mathbb{R}^{e\times d}$, $b \in \mathbb{R}^e$.
In the subsequent sections, we will use $x$ and $y$ instead of
$x^{\text{train}}$ and $y^{\text{train}}$ for simplicity.
We define these quantities
\begin{equation*}
  \bar x = \frac{\sum_{i=1}^{m} v_i x_i}{\sum_{i=1}^{m} v_i},
  \quad \bar y = \frac{\sum_{i=1}^{m} v_i y_i}{\sum_{i=1}^{m} v_i},
  \quad \hat x_i = x_i - \bar x,
  \quad \hat y_i = y_i - \bar y.
\end{equation*}
It follows that
\begin{equation}
  \label{eq:zero}
  \sum_{i=1}^{m} v_i\hat x_i = \sum_{i=1}^{m} v_i x_i - \sum_{i=1}^{m}
  v_i  \bar x_i = 0,
  \quad \sum_{i=1}^{m} v_i\hat y_i = 0.
\end{equation}
Considering the first part of the objective function in \eqref{eq:1b},
we obtain
\begin{equation}
  \begin{split}
    \sum_{i=1}^{m} v_i \norm{Wx_i + b - y_i}^2
    &= \sum_{i=1}^{m} v_i \norm{W\hat x_i - \hat y_i  + W \bar x + b
      - \bar y}^2
    \\
    &=
    \sum_{i=1}^{m} v_i \norm{W\hat x_i -\hat y_i}^2 +
    \sum_{i=1}^{m} v_i \norm{W \bar x + b - \bar y}^2
    \\
    & \quad + 2 \sum_{i=1}^{m} v_i \langle W \sum_{i=1}^{m} v_i\hat
    x_i - \sum_{i=1}^{m} v_i\hat y_i , W \bar x + b - \bar y \rangle
    \\
    &=
    \sum_{i=1}^{m} v_i \norm{W \hat x_i - \hat y_i}^2 + \sum_{i=1}^{m}
    v_i \norm{W\bar x + b - \bar y}^2.
\end{split}
\end{equation}
Using this, Problem~\eqref{eq:1b} is equivalent to
\begin{equation*}
  \min_{W,b} \quad
  \frac{1}{m} \sum_{i=1}^{m} v_i \norm{W\hat x_i - \hat y_i}^2 + s\norm{W}^2 +
  \frac{1}{m} \left( \sum_{i=1}^{m} v_i \right) \norm{W\bar x + b - \bar y}^2.
\end{equation*}
Therefore, the minimization can be performed separately in the
variables~$w$ and~$b$, and Problem~\eqref{eq:1b} is equivalent to
\begin{equation*}
  \min_{W} \ \frac{1}{m} \sum_{i=1}^{m} v_i \norm{W\hat x_i - \hat
    y_i}^2 + s\norm{W}^2
  \quad \text{and} \quad b = \bar y - W \bar x.
\end{equation*}
We solve the first minimization in $W$ by setting
\begin{equation*}
  0 = \sum_{i=1}^m \frac{2}{m}v_i (W \hat x_i - \hat y_i) \hat x_i^\top + 2sW =
  2W \left( \frac{1}{m} \sum_{i=1}^m v_i \hat x_i \hat x_i^\top +s \text{Id} \right)
  - \frac{2}{m} \sum_{i=1}^m v_i \hat y_i \hat x_i^\top,
\end{equation*}
which is valid if and only if
\begin{equation*}
  W = \left(\frac{1}{m} \sum_{i=1}^m v_i  \hat y_i \hat x_i^\top \right)
  \left(\frac{1}{m}  \hat x_i \hat x_i^\top +s \text{Id}  \right)^{-1}.
\end{equation*}
Finally, we obtain the formula in \eqref{eq:ll}.
Computationally, we calculate $W(v)z$ for $z \in \mathbb{R}^d$ by
first solving
\begin{equation*}
  (C_v(X) + s \text{Id})a = z
\end{equation*}
in $a$ and then calculating
\begin{equation*}
  W(v)z = C_v(X,Y) a.
\end{equation*}

\subsection{Upper-level calculations}

  We consider the upper-level problem in \eqref{prob:20240607}
  neglecting the constraints on $v$:
  \begin{equation}
    \label{eq:2b}
    \min_{v \in \mathbb{R}^{m}} \quad
    \frac{1}{2n} \sum_{j=1}^{n} \norm{W(v)x^{\text{val}}_j + b(v) -
      y_j^{\text{val}}}^2.
  \end{equation}
  For simplicity, in the subsequent sections, we will use $x$ and $y$
  instead of $x^{\text{val}}$ and $y^{\text{val}}$, and we will refer to the
  objective function in \eqref{eq:2b} as $J(v)$ with $J: \mathbb{R}^n
  \rightarrow \mathbb{R}$.

To solve the problem with a stochastic gradient descent
  algorithm, we need to calculate the gradient of $J(v)$:
  \begin{equation}
    \begin{split}
      \frac{\partial J(v)}{\partial v_j}
      &=
      \frac{1}{n} \sum_{j=1}^{n}
      \left( W(v)x_i + b(v) - y_i\right)^\top
      \left( \frac{ \partial W (v)}{\partial v_j} x_i + \frac{\partial
          b(v)}{\partial v_j} \right)
      \\
      &=
      \frac{1}{n} \sum_{j=1}^{n}
      \left( W(v)   (x_i-\bar x_v) + \bar y_v - y_i\right)^\top
      \left( \frac{ \partial W (v)}{\partial v_j} (x_i - \bar x_v) +
        \frac{1}{\sum_{i=1}^m v_i} \hat y_j - W(v)\hat x_j \right)
      \\
      & =
      \frac{1}{n} \sum_{j=1}^{n}
      \left( W(v)   (x_i-\bar x_v) + \bar y_v - y_i\right)^\top
      \frac{ \partial W (v)}{\partial v_j} (x_i - \bar x_v)
      \\
      & \quad +
      \frac{1}{\sum_{i=1}^m v_i}
      \left( W (v) (x_i - \bar x_v) + \bar y_v - \bar y \right)^\top
      \left( \hat y_j - W(v) \hat x_j\right).
    \end{split}
  \end{equation}
  We will proceed to calculate $\frac{\partial W (v)}{\partial v_j}$,
  taking advantage of the expression given in~\eqref{eq:ll}.
We consider the following three maps
\begin{align*}
  \phi \colon \mathbb{R}^n \to \mathbb{R}^{n\times d},
  & \quad v \mapsto \sum_{i=1}^{n} v_i (y_i-\bar y_v) (x_i-\bar x_v)^\top,
  \\
  \psi \colon \mathbb{R}^n \to \mathbb{R}^{d\times d},
  & \quad v \mapsto \sum_{i=1}^{n} v_i (x_i-\bar x_v)
    (x_i-\bar x_v)^\top + s \text{Id},
  \\
  \varphi \colon \mathbb{GL}(d) \to \mathbb{R}^{d\times d},
  & \quad A \mapsto A^{-1}
\end{align*}
with $\mathbb{GL}(d)$ being the general linear group of degree $d$.
Therefore, we can write
\begin{equation}
  \label{eq:W}
  W(v) = \phi(v) \varphi(\psi(v)) \in \mathbb{R}^{n \times d}.
\end{equation}
Notice that
\begin{equation}
  \varphi'(A) \colon \mathbb{R}^{d\times d} \to \mathbb{R}^{d\times d},
  \quad U \mapsto A^{-1}UA^{-1} \quad \forall U \in \mathbb{R}^{d\times d}
\end{equation}
since
\begin{equation}
\label{eq:varphib}
\small
\frac{\varphi(A+tU) - \varphi(A)}{t} =
\frac{(A+tU)^{-1} - A^{-1}}{t} =
A^{-1} \frac{A-(A+tU)}{t} (A+tU)^{-1} =
A^{-1} U (A+tU)^{-1}.
\end{equation}
Using \eqref{eq:W} and \eqref{eq:varphib}, we can write
\begin{equation}
  \begin{split}
    \frac{W(v+tu) - W(v)}{t}
    &=
    \frac{\phi(v+tu) \varphi(\psi(v+tu)) - \phi(v)
      \varphi(\psi(v))}{t}
    \\
    &= \frac{1}{t} ( \phi(v+tu) \varphi(\psi(v+tu))  -\phi(v+tu) \varphi(\psi(v))
    \\
    & \quad +\phi(v+tu) \varphi(\psi(v)) -\phi(v) \varphi(\psi(v)) )
    \\
    &=
    \phi(v+tu)  \frac{\varphi(\psi(v+tu)) - \varphi(\psi(v))}{t} +
    \frac{\phi(v+tu) - \phi(v)}{t} \varphi(\psi(v)).
  \end{split}
\end{equation}
It follows that
\begin{equation}
  \begin{split}
    \label{eq:44}
    W'(v)[u]
    & =
    \phi(v) (\varphi \circ \psi)'(v)[u] + \phi'(v)[u] \varphi(\psi(v))
    \\
    &=
    \phi(v) \psi(v)^{-1} \psi'(v)[u] \psi(v)^{-1} +\phi'(v)[u]
    \psi(v)^{-1}
    \\
    &=
    (C_v(X,Y) (C_v(X)+s \text{Id})^{-1} \psi'(v)[u] + \phi'(v)[u])
    (C_v(X)+s \text{Id})^{-1} \in \mathbb{R}^{n\times d}
  \end{split}
\end{equation}
holds, where, in the second equality we used that
\begin{equation*}
  (\varphi \circ \psi)'(v)[u] =
  \varphi'(\psi(v)) (\psi'(v)[u]) =
  \psi(v)^{-1} \psi'(v)[u] \psi(v)^{-1}.
\end{equation*}
Choosing $u=e_j$ in \eqref{eq:44}, we obtain
\begin{equation}
\frac{\partial W(v)}{\partial v_j} =
(C_v(X,Y) (C_v(X)+s \text{Id})^{-1} \frac{\partial \psi(v)}{\partial v_j} + \frac{\partial \phi(v)}{\partial v_j}
(C_v(X)+s \text{Id})^{-1}.
\end{equation}
From the definitions of $\bar x_v$ and $\bar y_v$, we retrieve
\begin{align*}
  & \frac{\partial \bar x_v}{\partial v_j}
    = \frac{\partial}{\partial v_j} \left(
    \frac{\sum_{i=1}^{m}v_ix_i}{\sum_{i=1}^{m}v_i}\right) =
    -\frac{\sum_{i=1}^{m}v_ix_i}{\left(\sum_{i=1}^{m}v_i\right)^2}
    + \frac{x_j}{\sum_{i=1}^{m}v_i} =
    \frac{x_j - \bar x_v}{\sum_{i=1}^{m}v_i} =
    \frac{\hat x_j}{\sum_{i=1}^{m}v_i},
  \\
  & \frac{\partial \bar y_v}{\partial v_j} =
    \frac{y_j - \bar y_v}{\sum_{i=1}^{m}v_i} =
    \frac{\hat y_j}{\sum_{i=1}^{m}v_i}.
\end{align*}
Therefore,
\begin{align*}
  \frac{\partial \phi(v)}{\partial v_j}
  & = \sum_{i=1}^n \delta_{ij} (y_i-\bar y_v)(x_i-\bar x_v)^\top
    + \sum_{i=1}^n v_i \frac{\partial}{\partial v_j} (\bar y_v -
    y_i)(\bar x_v-x_i)^\top
  \\
  & = \hat y_j \hat x_j^\top - \frac{\hat y_j \sum_{i=1}^n v_i \hat x_i}{\sum_{i=1}^n v_i} -
    \frac{  \left(\sum_{i=1}^n v_i \hat y_i\right) \hat
    x_j^\top}{\sum_{i=1}^n v_i}
  \\
  & = \hat y_j \hat x_j^\top,
\end{align*}
where we used \eqref{eq:zero} in the last equation as well as
\begin{equation*}
  \frac{\partial \psi(v)}{\partial v_j} =
  \hat x_j \hat x_j^\top.
\end{equation*}
Finally,
\begin{equation*}
  \frac{\partial W (v)}{\partial v_j} =
  \left( C_v(X,Y) (C_v(X) + s \mathrm{Id})^{-1}
    \hat X_j \hat X_j^\top + \hat Y_j \hat X_j^\top \right) (C_v(X) +
  s \mathrm{Id})^{-1},
\end{equation*}
and
\begin{align*}
    \frac{\partial b (v)}{\partial v_j}
    &=
      \frac{\partial \bar y_v}{\partial v_j} -
      \frac{\partial W(v)}{\partial v_j} \bar X_v -
      W(v) \frac{\partial \bar x_v}{\partial v_j}
  \\
    &=
      \frac{\hat y_j}{\sum_{i=1}^n v_i} -
      \frac{\partial W(v)}{\partial v_j} \bar X_v -
      W(v) \frac{\hat x_j}{\sum_{i=1}^n v_i}
  \\
    &=
      \frac{\hat y_j - W(v)\hat x_j}{\sum_{i=1}^n v_i} -
      \frac{\partial W(v)}{\partial v_j} \hat x_v.
\end{align*}
As for the lower lever, computationally we calculate $\frac{\partial
  W(v)}{\partial v_j}z$ for $z \in \mathbb{R}^d$, solving the
following systems
\begin{equation*}
  (C_v(X)+s \text{Id})a _j = \hat x_j,
  \quad
  (C_v(X)+s \text{Id})a  = z.
\end{equation*}
After solving the upper-level problem with the stochastic gradient
descent method, we need to project the solution onto the simplex
defined by the knapsack constraint. To do this efficiently, we
utilize the Kiwiel algorithm \citep{kiwiel2008breakpoint}.

\subsection{Experimental setup}
\label{sec:app_D3}

In Section \ref{sec:DHC_exp}, we present experiments on the data distillation problem for two regression tasks involving the following real-world datasets:
  \begin{description}
  \item[\emph{music} \citep{year_prediction_msd_203}]
    is a dataset that includes song features from 1922 to 2011. It
    consists of 463\,715 training samples, with the first 231\,857 used
    for training the lower level and the remaining 231\,857 reserved for
    testing the weights afterward. Additionally, 51\,630 validation
    samples were utilized for the upper level.
    Each sample represents a song, featuring 90 attributes (12 related
    to timbre average and 78 related to timbre covariance), with the
    year of release as the target variable (as an integer).
    The aim is to predict the release year of a song based on its
    audio features.
  \item[\emph{blog} \citep{blogfeedback_304}] is a dataset
    containing features extracted from blog posts. It comprises 52\,397
    training and 7624 validation samples, with the first 1089
    used for training the lower level and the remaining 6535 set aside
    for testing the weights afterward. Each sample represents a post
    with 280 features, and the target variable is the number of
    comments received in the next 24 hours (as an integer).
    The goal is to predict comments received in the next 24 hours
    using various features.
  \end{description}
Regarding the details of the experiments, we implement the framework
of  Algorithm~\ref{alg:penalty-algorithm} by solving a sequence of
$K$ outer optimization problems \eqref{prob:refk} of type
\eqref{eq:approximate3} w.r.t.~$v$. The parameters are selected as explained in \ref{sec:GLS_setup}. In particular, we initialize $\varepsilon^0=10^9$ for both datasets, such that the penalty term does not dominate the function, making the evaluation at the first point behave similarly to the unconstrained version. In order to better exploit the hyperparameter $\beta$, we perform an ablation study for the setting \emph{perc}$= 20 \%$. From Table \ref{tab:table12}, we can notice that a high $\beta$ will create more thresholds until convergence, a higher leading to higher number of external iterations and a longer running time. Across the different $\beta$, the algorithm seems to find different local minima.  The best performances, in terms of test error and reconstruction errors, are reached with $\beta=0.9$, which we used for all the experiments. The true stopping criterion is the condition $\dist_\infty(v^k, \{0,1\}^m)<tol$ with $tol=10^{-2}$ tolerance.
For each problem of type \eqref{eq:approximate3}, we solve the lower-level problem exactly and the upper-level problem with stochastic gradient descent.
In the lower-level problem, we set the regularization parameter to $s=10^2$.
For the upper-level problem, we use a batch of size $600$ for computing time reasons, we perform $100$ inner iterations for each problem~\eqref{prob:refk}, and we set the step size to $10^{-3}$ for \emph{music} and to $10^{-5}$ for \emph{blog}.
The hyperparameter $v$ is projected onto the simplex defined by the
knapsack constraint and initialized at $\frac{\tau}{m}\mathrm{I}_m$
with~$m$ being the size of the training set and $\tau$ being the
budget.
For a fair comparison with the \emph{relaxation and rounding}, with both \emph{simple} and \emph{top-$k$ hard thresholding} rounding, we run the code with the same parameters and the same total number of iterations, rounding $v$ at the end.
We perform the projection onto the feasible set defined by the knapsack constraints using the Kiwiel algorithm with tolerance $10^{-10}$ and $10^{3}$ number of iterations.\\ 
We report in Figure \ref{fig:11} the evolution of the quantity $\dist_\infty (v^k, \Ybin)$ along the iterations for the \emph{relax and penalize} method, showing that for $k$ sufficiently large it rapidly reaches $0$ as soon as it is below $\frac{1}{2}$ as seen in Figure \ref{fig:12} for the group lasso problem.

\setlength{\tabcolsep}{2.5pt}
\begin{table}[t]
\caption{Ablation study hyperparameter $\beta$ for the setting \emph{perc}$= 20\%$.}
\label{tab:table12}
\centering
\begin{scriptsize}
\begin{NiceTabular}{l cccc cccc}
 & \multicolumn{4}{c}{\emph{music}} & \multicolumn{4}{c}{\emph{blog}}\\
\cmidrule(lr){2-5} \cmidrule(lr){6-9}
$\beta$ & outer iter & $\ell^{\text{val}}$ & $\ell^{\text{test}}$ & RMSE  & outer iter & $\ell^{\text{val}}$ & $\ell^{\text{test}}$ & RMSE\\
\midrule
0.1 & 14.2 $\pm$ 0.45 & 58.77 $\pm$ 0.12 & 60.59 $\pm$ 0.12 & 11.01 $\pm$ 0.01 & 15.2 $\pm$ 0.45 & 309.24 $\pm$ 0.53 & 234.94 $\pm$ 0.78 & 21.68 $\pm$ 0.04 \\ 
0.3 & 25.8 $\pm$ 0.45 & 58.16 $\pm$ 0.15 & 59.63 $\pm$ 0.16 & 10.92 $\pm$ 0.01 & 28.0 $\pm$ 0.00 & 306.75 $\pm$ 0.87 & 229.68 $\pm$ 2.39 & 21.43 $\pm$ 0.11 \\ 
0.5 & 43.0 $\pm$ 0.00 & 57.30 $\pm$ 0.09 & 58.61 $\pm$ 0.10 & 10.83 $\pm$ 0.01 & 47.0 $\pm$ 0.00 & 305.29 $\pm$ 0.34 & 225.26 $\pm$ 2.11 & 21.23 $\pm$ 0.10 \\ 
0.7 & 81.2 $\pm$ 0.45 & 56.03 $\pm$ 0.07 & 57.28 $\pm$ 0.08 & 10.70 $\pm$ 0.01 & 88.8 $\pm$ 0.84 & 302.71 $\pm$ 0.83 & 219.09 $\pm$ 1.81 & 20.93 $\pm$ 0.09 \\ 
$\pmb{0.9}$  & $\pmb{258.0 \pm 1.87}$  & $\pmb{54.23 \pm 0.03}$  & $\pmb{55.48 \pm 0.03}$  & $\pmb{10.53 \pm 0.00}$  & $\pmb{291.3 \pm 129.40}$  & $\pmb{296.77 \pm 25.52}$  & $\pmb{207.90 \pm 18.13}$  & $\pmb{20.39 \pm 0.85}$ \\
\bottomrule
\end{NiceTabular}
\end{scriptsize}
\vskip-1em
\end{table}

\begin{figure*}[htb]
  \centering
  \input{data/analysis}

\begin{tikzpicture}
\begin{groupplot}[
        group style={
            group name=dataset,
            columns=1,
            rows=1,
		},
        width = 5.5cm, 
        height = 5cm,
        axis x line*=bottom,
        xtick align=center,
        ytick align=center,
        ymajorgrids,
		grid style={draw=gray!50},
        y axis line style={draw=none},
		ytick style={draw=none},
        yticklabel style={xshift=-1em},
        every node near coord/.style={font=\scriptsize, yshift=-1.6em},
		y tick label style={
        /pgf/number format/.cd,
            fixed,
            precision=3,
        /tikz/.cd
    	},
        xticklabel style={rotate=35, anchor=east, yshift=-5pt, xshift=5pt, font=\footnotesize\sffamily},
        yticklabel style={font=\footnotesize\sffamily, xshift=10pt},
        ylabel style={font=\footnotesize\sffamily, align=center},
        xmin=0,xmax=30000,
        xtick distance=1000, 
        xticklabels={,0.1,,,,,0.5,,,,,1.0,,,,,1.5,,,,,2.0,,,,,2.5,,,,,3.0},
        x tick scale label style={yshift=10pt}
        ]
\nextgroupplot[]
    \addplot [thick, color=gray] table[x=k,y=distinf_list]{\pen};
\end{groupplot}
\end{tikzpicture}
  \caption{Evolution of $\dist_\infty (v^k, \Ybin)$ for the \emph{blog} dataset and budget 20\%.}
  \label{fig:11}
\end{figure*}

\subsection{Additional results along the steps}
\label{sec:DHC_auxiliary4}
 In Figure \ref{fig:2}-\ref{fig:4}, we report the evolution along the different iterations of the quantities reported in Table \ref{tab:table3}  for the \emph{music} dataset with distillation budget at $20\%$ of the training set size.
Since the solution can also violate integrality constraints through the iterations, the quantities $\left|D^{\text{train}}_{\text{syn}}\right|$, $\ell^{\text{test}}$, and RMSE, are referring to the correspondent rounded value.
The y-axis represents the internal iterations. Notice that, for example, the violation of the integrality constraints for the \emph{relax and penalize} method is not checked at each internal iteration, but only for the outer ones.
We recall that all the methods under analysis evolve with the same total number of iterations and start from the same point. 
Figure \ref{fig:2} and \ref{fig:3} shows the results regarding the \emph{relaxation and rounding} method, respectively using \emph{simple} and \emph{top-k hard thresholding} rounding. 
We show the evolution of $\ell^{\text{val}}$ and $\ell^{\text{test}}$ both before and after rounding the solution (resp. in orange and teal), noticing that the objective function values increase after rounding.
It can be observed that, for the \emph{relaxation and rounding} method with \emph{simple} rounding, the quantity $\left|D^{\text{train}}_{\text{syn}}\right|$ increases over the iterations but remains well below the total available budget (46371), while the \emph{top-k hard thresholding} use the whole budget at each iteration by definition. 
For the \emph{relaxation and rounding} method, $\dist_\infty (v, \Ybin)$ stays stable at $0.5$, therefore, no integer solution is found with both the rounding procedures. 
Figure \ref{fig:3} shows the results regarding the \emph{relax and penalize} method. 
From the plots of $\ell^{\text{val}}$, $\ell^{\text{test}}$ and RMSE, we can observe an increase after $25,000$ inner iterations, when the penalty term has a higher weight in \eqref{prob:refk}. 
Corresponding to this, it is apparent that the method finds an integer solution ($\dist_\infty (v, \Ybin) = 0$) and that it saturates the budget ($\left|D^{\text{train}}_{\text{syn}}\right| = 46371$).


\begin{figure*}[htb]
  \centering
  \input{data/analysis}

\begin{tikzpicture}

\begin{groupplot}[
        group style={
            group name=dataset,
            columns=3,
            rows=2,
            horizontal sep=1cm,
            vertical sep=1.2cm,
		},
        width = 5.5cm, 
        height = 5cm,
        axis x line*=bottom,
        xtick align=center,
        ytick align=center,
        ymajorgrids,
		grid style={draw=gray!50},
        y axis line style={draw=none},
		ytick style={draw=none},
        yticklabel style={xshift=-1em},
        every node near coord/.style={font=\scriptsize, yshift=-1.6em},
		y tick label style={
        /pgf/number format/.cd,
            fixed,
            precision=3,
        /tikz/.cd
    	},
        xticklabel style={rotate=35, anchor=east, yshift=-5pt, xshift=5pt, font=\footnotesize\sffamily},
        yticklabel style={font=\footnotesize\sffamily, xshift=10pt},
        ylabel style={font=\footnotesize\sffamily, align=center},
        xtick={0,5000,10000,15000,20000,25000,30000},
        xmin=-0.1,xmax=30000,
        x tick scale label style={yshift=10pt},
        y tick scale label style={yshift=-6pt},]

    \nextgroupplot[xticklabels={},scaled x ticks = false,title=(a) $\ell^{\text{val}}$,legend style={at={(0.5,0.45)},anchor=west,font=\small}]
    \addplot [thick, color=orange]table[x=k,y=of_list]{\rounding};
    \addlegendentry{before}
    \addplot [thick, color=teal]table[x=k,y=of_round_list]{\rounding};
    \addlegendentry{after}

    \nextgroupplot[xticklabels={},scaled x ticks = false,title=(b) $\left|D^{\text{train}}_{\text{syn}}\right|$,ymax=800]
    \addplot [thick, color=teal]table[x=k,y=rr_round_list]{\rounding};

    \nextgroupplot[title={(c) $\dist_\infty (v, \{0,1\}^m)$},ymax=0.55,ymin=0.35]
    \addplot [thick, color=orange]table[x=k,y=distbin_list]{\rounding};
 
    \nextgroupplot[ title=(d) $\ell^{\text{test}}$,legend style={at={(0.5,0.45)},anchor=west,font=\small}]
    \addplot [thick, color=orange]table[x=k,y=of_val_list]{\rounding};
    \addlegendentry{before}
    \addplot [thick, color=teal]table[x=k,y=of_round_val_list]{\rounding};
    \addlegendentry{after}

    \nextgroupplot[title= (e) RMSE,ymax=400,ymin=0]
    \addplot [thick, color=teal]table[x=k,y=RMSE_round_val_list]{\rounding};

\end{groupplot}

\end{tikzpicture}
  \caption{Evolution of the quantities reported in Table \ref{tab:table3} for the \emph{relaxation and rounding} method with \emph{simple} rounding on the \emph{music} dataset with distillation budget at $20\%$ of the training set size.}
  \label{fig:2}
\end{figure*}

\begin{figure*}[htb]
  \centering
  \input{data/analysis}

\begin{tikzpicture}

\begin{groupplot}[
        group style={
            group name=dataset,
            columns=3,
            rows=2,
            horizontal sep=1cm,
            vertical sep=1.2cm,
		},
        width = 5.5cm, 
        height = 5cm,
        axis x line*=bottom,
        xtick align=center,
        ytick align=center,
        ymajorgrids,
		grid style={draw=gray!50},
        y axis line style={draw=none},
		ytick style={draw=none},
        yticklabel style={xshift=-1em},
        every node near coord/.style={font=\scriptsize, yshift=-1.6em},
		y tick label style={
        /pgf/number format/.cd,
            fixed,
            precision=3,
        /tikz/.cd
    	},
        xticklabel style={rotate=35, anchor=east, yshift=-5pt, xshift=5pt, font=\footnotesize\sffamily},
        yticklabel style={font=\footnotesize\sffamily, xshift=10pt},
        ylabel style={font=\footnotesize\sffamily, align=center},
        xtick={0,5000,10000,15000,20000,25000,30000},
        xmin=-0.1,xmax=30000,
        x tick scale label style={yshift=10pt},
        y tick scale label style={yshift=-6pt},]

    \nextgroupplot[xticklabels={},scaled x ticks = false,title=(a) $\ell^{\text{val}}$,legend style={at={(0.5,0.45)},anchor=west,font=\small}]
    \addplot [thick, color=orange]table[x=k,y=of_list]{\rounding};
    \addlegendentry{before}
    \addplot [thick, color=teal]table[x=k,y=of_round_list_new]{\rounding};
    \addlegendentry{after}

    \nextgroupplot[xticklabels={},scaled x ticks = false,title=(b) $\left|D^{\text{train}}_{\text{syn}}\right|$,ymin=0,ymax=11000]
    \addplot [thick, color=teal]table[x=k,y=rr_round_list_new]{\rounding};

    \nextgroupplot[title={(c) $\dist_\infty (v, \{0,1\}^m)$},ymax=0.55,ymin=0.35]
    \addplot [thick, color=orange]table[x=k,y=distbin_list]{\rounding};
 
    \nextgroupplot[title=(d) $\ell^{\text{test}}$,legend style={at={(0.5,0.45)},anchor=west,font=\small}]
    \addplot [thick, color=orange]table[x=k,y=of_val_list]{\rounding};
    \addlegendentry{before}
    \addplot [thick, color=teal]table[x=k,y=of_round_val_list_new]{\rounding};
    \addlegendentry{after}

    \nextgroupplot[title= (e) RMSE,ymax=30,ymin=20]
    \addplot [thick, color=teal]table[x=k,y=RMSE_round_val_list_new]{\rounding};

\end{groupplot}

\end{tikzpicture}
  \caption{Evolution of the quantities reported in Table \ref{tab:table3} for the \emph{relaxation and rounding} method with \emph{top-k hard thresholding} rounding on the \emph{music} dataset with distillation budget at $20\%$ of the training set size.}
  \label{fig:3}
\end{figure*}

\begin{figure*}[htb]
  \centering
  \input{data/analysis}

\definecolor{pink2}{RGB}{216,166,166}
\begin{tikzpicture}

\begin{groupplot}[
        group style={
            group name=dataset,
            columns=3,
            rows=2,
            horizontal sep=1cm,
            vertical sep=1.2cm,
		},
        width = 5.5cm, 
        height = 5cm,
        axis x line*=bottom,
        xtick align=center,
        ytick align=center,
        ymajorgrids,
		grid style={draw=gray!50},
        y axis line style={draw=none},
		ytick style={draw=none},
        yticklabel style={xshift=-1em},
        every node near coord/.style={font=\scriptsize, yshift=-1.6em},
		y tick label style={
        /pgf/number format/.cd,
            fixed,
            precision=3,
        /tikz/.cd
    	},
        xticklabel style={rotate=35, anchor=east, yshift=-5pt, xshift=5pt, font=\footnotesize\sffamily},
        yticklabel style={font=\footnotesize\sffamily, xshift=10pt},
        ylabel style={font=\footnotesize\sffamily, align=center},
        xtick={0,5000,10000,15000,20000,25000,30000},
        xmin=-0.1,xmax=30000,
        x tick scale label style={yshift=10pt},
        y tick scale label style={yshift=-6pt},]

    \nextgroupplot[xticklabels={},scaled x ticks = false,title=(a) $\ell^{\text{val}}$]
    \addplot [thick, color=purple]table[x=k,y=of_list]{\pen};

    \nextgroupplot[xticklabels={},scaled x ticks = false,title=(b) $\left|D^{\text{train}}_{\text{syn}}\right|$],ymax=60000]
    \addplot [thick, color=purple]table[x=k,y=rr_round_list]{\pen};
    \draw [draw=black, line width=0.05cm, densely dotted] (0,10479) -- (30000,10479);
    \node at (1cm, 2.7cm) {budget};

    \nextgroupplot[xticklabels={},scaled x ticks = false,title={(c) $\dist_\infty (v, \{0,1\}^m)$},ymax=0.6]
    \addplot [thick, color=purple]table[x=k,y=distbin_list]{\pen};
 
    \nextgroupplot[title=(d) $\ell^{\text{test}}$]
    \addplot [thick, color=purple]table[x=k,y=of_val_list]{\pen};

    \nextgroupplot[title= (e) RMSE,ymax=11.5,ymax=25]
    \addplot [thick, color=purple]table[x=k,y=RMSE_val_list]{\pen};

    
\end{groupplot}

\end{tikzpicture}
  \caption{Evolution of the quantities reported in Table \ref{tab:table3} for the \emph{relax and penalize} method on the \emph{music} dataset with distillation budget at $20\%$ of the training set size.}
  \label{fig:4}
\end{figure*}


\end{document}